\newif\ifdraft
\title{Memorizing Gaussians with no over-parameterizaion via gradient decent on neural networks}
\author{
	\vspace{1cm}
  Amit Daniely\thanks{Hebrew University and Google} 
}
\begin{document}
\maketitle
\setcounter{page}{0}

\thispagestyle{empty}
\maketitle

\begin{abstract}
We prove that a single step of gradient decent over depth two network, with $q$ hidden neurons, starting from orthogonal initialization, can memorize $\Omega\left(\frac{dq}{\log^4(d)}\right)$ independent and randomly labeled Gaussians in $\reals^d$. The result is valid for a large class of activation functions, which includes the absolute value.
\end{abstract}

\newpage

\section{Introduction}

In recent years, much attention has been given to the ability of neural networks, trained with gradient methods, to memorize datasets (e.g. \citep{xie2016diverse, daniely2016toward,  daniely2017sgd, brutzkus2017sgd, oymak2018overparameterized, du2018gradient, allen2018learning, allen2018convergence, cao2019generalization, zou2019improved,  song2019quadratic, ge2019mildly, oymak_towards_2019, arora2019fine,  ziwei2019polylogarithmic, cao2019generalization, ma2019comparative, lee2019wide, daniely2019neural}). The main question is ``how large the networks should be in order to memorize a given dataset $S = \left\{(\x_1,y_1),\ldots,(\x_m,y_m)\right\}\subset \reals^d\times \{\pm,1\}$?"
Here, an example is considered memorized if $y_ih(\x_i) > 0$ for the learned function $h$.

In order to memorize even just slightly more that half of the $m$ examples we need a network with at least $m$ parameters (up to poly-log factors). 
In this paper we will focus on the regime in which the number of parameters is $\tilde{O}(m)$. We will refer to this regime as {\em near optimal memorization}.
To the best of our knowledge, there are very few results that proves near optimal memorization: \citet{brutzkus2017sgd} implies near optimal memorization of linearly independent points (in particular, $m\le d$). \citet{ge2019mildly} implies near optimal memorization of $m\le d^2$  points in general position if the activation is quadratic.  Lastly, \citet{daniely2019neural} shows near optimal memorization of random points in the sphere, for many activation functions, but requires weights initialization that is far from standard, and essentially  makes the optimization process equivalent to NTK optimization \cite{jacot2018neural}.

In this paper we prove near optimal memorization of $m$ ($d$-dimensional) Gaussians, by depth-two network  trained with gradient decent, starting from standard orthogonal initialization, and for a large family of activation functions.

\paragraph{Main Result.} The input examples are denoted $(\x_1,y_1),\ldots,(\x_m,y_m)$. 
We assume that the $\x_i$'s sampled independently from $\cn(0,I_d)$, and the $y_i$'s are independent Rademacher random variables. The initial matrix $W \in M_{q,d}$ is assumed to be orthonormal.
The activation $\sigma:\reals\to\reals$ is assumed to be  (1) $O(1)$ Lipschitz, (2) piecewise twice differentiable with finitely many pieces and a uniform bounded on the second derivative in any piece, and (3) satisfies $\E_{X\sim\cn(0,1)}\sigma'(X)  = 0 $.
An example for such an activation function is the absolute value.

We consider depth two network which calculates the function
\[
h_W(\x) = \frac{1}{\sqrt{q}}\sum_{i=1}^qa_i\sigma(\inner{\bw_i,\x})
\]
Where $a_i\in \{\pm 1\}$ satisfy  $\sum_{i=1}^qa_i = O\left(\sqrt{q}\right)$ (note that this is valid w.h.p. is the $a_i$'s are random).
We consider a single gradient step on $W$, with step size of $\eta = \frac{m\ln(d)}{d}$,
w.r.t. the scaled hinge loss $\ell:\reals\times\{\pm 1\}\to [0,\infty)$, given by
\[
\ell(\hat y,y) = (\ln(d) - \hat y y)_+
\]
We denote by $W^+$ the weights after this single gradient step
\begin{theorem}\label{thm:main}
Assume that $m \le \frac{dq}{\log^4(d)}$ and that $q\ge \log^4(d)$. 
We have that w.p. $1-o(1)$, for every $i\in [m]$, $y_ih_{W^+}(\x_{i}) = \Omega\left( \ln(d) \right)$.
\end{theorem}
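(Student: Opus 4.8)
The plan is to compute the single step in closed form, expand $h_{W^+}(\x_k)$ to first order in the (tiny) update, and show that $y_kh_{W^+}(\x_k)$ equals a ``signal'' of order $\ln d$ plus lower‑order errors; a union bound over $k\in[m]$ finishes. At initialization $W\x_k=(\inner{\bw_1,\x_k},\dots,\inner{\bw_q,\x_k})\sim\cn(0,I_q)$ exactly, so a sub‑Gaussian bound over the $q$ neurons together with $\sum_i a_i=O(\sqrt q)$ gives $|h_W(\x_k)|=O(\sqrt{\log d})\ll\ln d$ for all $k$, whence every example has an active hinge and $\partial_{\hat y}\ell(h_W(\x_j),y_j)=-y_j$. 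Taking the objective to be the average of $\ell$ over the sample (so the factor $m$ in $\eta$ cancels the averaging) and using $\|\x_k\|^2=d(1+o(1))$, the updated pre‑activations are $\inner{\bw_i^+,\x_k}=u_i+\delta_{i,k}+o(1/\sqrt q)$ with $u_i:=\inner{\bw_i,\x_k}$ and $\delta_{i,k}:=a_i\tilde\delta_{i,k}$, where
\[
\tilde\delta_{i,k}:=\frac{\ln d}{d\sqrt q}\sum_{j=1}^m y_j\sigma'(\inner{\bw_i,\x_j})\inner{\x_j,\x_k}=\frac{\ln d}{\sqrt q}y_k\sigma'(u_i)+\frac{\ln d}{d\sqrt q}\sum_{j\ne k}y_j\sigma'(\inner{\bw_i,\x_j})\inner{\x_j,\x_k},
\]
the first summand being the ``self'' part and the second the ``cross'' part; note $|\delta_{i,k}|=|\tilde\delta_{i,k}|$ and $\mathrm{sign}(\delta_{i,k})=a_i\mathrm{sign}(\tilde\delta_{i,k})$. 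A first‑order expansion of $\sigma$ at $u_i$ then gives
\[
y_kh_{W^+}(\x_k)=y_kh_W(\x_k)+\frac{\ln d}{q}\sum_i\sigma'(u_i)^2+y_kN_k+y_k\,\mathrm{Err}_k,
\]
where $S_k:=\frac{\ln d}{q}\sum_i\sigma'(u_i)^2$ comes from the self part (using $a_i^2=1$), the cross part contributes $N_k:=\frac{\ln d}{d}\sum_{j\ne k}y_j\inner{\x_j,\x_k}B_{jk}$ with $B_{jk}:=\frac1q\sum_i\sigma'(\inner{\bw_i,\x_k})\sigma'(\inner{\bw_i,\x_j})$, and $\mathrm{Err}_k:=\frac1{\sqrt q}\sum_i a_i\big[\sigma(\inner{\bw_i^+,\x_k})-\sigma(u_i)-\sigma'(u_i)\delta_{i,k}\big]$ collects the curvature and non‑smoothness of $\sigma$.

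The signal and the initialization terms are easy. Since $u_1,\dots,u_q$ are i.i.d.\ $\cn(0,1)$ and $\sigma'$ is bounded, $S_k=\ln d\,(c_\sigma+O(\sqrt{\log d/q}))$ with $c_\sigma:=\E_{X\sim\cn(0,1)}\sigma'(X)^2$, which equals $1$ for the absolute value and is a positive constant for any non‑degenerate $\sigma$ in the class, so $S_k\ge\tfrac{c_\sigma}{2}\ln d$ for all $k$ once $q\ge\log^4 d$, w.p.\ $1-o(1)$; and $|y_kh_W(\x_k)|=O(\sqrt{\log d})=o(\ln d)$. For the cross noise $N_k$ the key point is that for a fixed orthonormal $W$ the vectors $W\x_j$ and $W\x_k$ are \emph{independent} $\cn(0,I_q)$'s, so $\E[qB_{jk}]=q(\E\sigma'(X))^2=0$ by assumption~(3) --- this is the one place where that assumption is essential. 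As $qB_{jk}$ is a sum of $q$ bounded independent terms, $|B_{jk}|=O(\sqrt{\log m/q})$ uniformly over pairs, and $|\inner{\x_j,\x_k}|=O(\sqrt{d\log m})$ uniformly, w.p.\ $1-o(1)$; conditioning on $W$ and all the $\x$'s, $N_k$ is a Rademacher sum in $(y_j)_{j\ne k}$ with $\sum_{j\ne k}(\inner{\x_j,\x_k}B_{jk})^2=O(md\log m/q)$, so Hoeffding and a union bound over $k$ give $|N_k|=O(\ln d\,\sqrt{m(\log m)^2/(dq)})=o(\ln d)$, using $m\le dq/\log^4 d$.

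The main obstacle is $\mathrm{Err}_k$. First, the perturbations are uniformly tiny: $|\tilde\delta_{i,k}|\le\Delta$ for all $i,k$ with $\Delta=O(\log^{-1/2}d)$, w.p.\ $1-o(1)$ (another Rademacher‑sum bound for the cross part, using $m\le dq/\log^4 d$; the self part is $O(\ln d/\sqrt q)$). For $\sigma=|\cdot|$, $\sigma$ is piecewise linear and $\mathrm{Err}_k$ comes entirely from the kink, but the naive bound $|\mathrm{Err}_k|\le\frac1{\sqrt q}\,\#\{i:|u_i|<|\tilde\delta_{i,k}|\}\cdot2\Delta=O(\sqrt q\,\Delta^2)$ is $O(1)$ from the self part but $O(\sqrt q/\log d)$ from the cross part --- too large when $q$ is large; genuine cancellation, not a triangle inequality, is needed here, and this is the technical heart. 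The exact identity $\sigma(u+\delta)-\sigma(u)-\sigma'(u)\delta=2(|\delta|-|u|)_+\mathbf{1}[\mathrm{sign}(u)=-\mathrm{sign}(\delta)]$ together with $\mathbf{1}[\mathrm{sign}(u_i)=-a_i\mathrm{sign}(\tilde\delta_{i,k})]=\tfrac12(1-a_i\mathrm{sign}(u_i)\mathrm{sign}(\tilde\delta_{i,k}))$ and $r_i:=(|\tilde\delta_{i,k}|-|u_i|)_+\ge0$ yields the clean split
\[
\mathrm{Err}_k=\frac1{\sqrt q}\sum_i a_ir_i-\frac1{\sqrt q}\sum_i\mathrm{sign}(u_i)\,\mathrm{sign}(\tilde\delta_{i,k})\,r_i.
\]
In the first sum the $a_i$ are fixed whereas (after conditioning on the labels and on the $\inner{\x_j,\x_k}$) the $r_i$ are essentially i.i.d.\ across neurons and independent of the $a_i$, so Bernstein gives $\frac1{\sqrt q}\sum_i a_ir_i=o(1)$ because $\E[r]=O(\E[\tilde\delta_{i,k}^2])=O(\log^{-2}d)$, $\E[r^2]=O(\E[|\tilde\delta_{i,k}|^3])=O(\log^{-3}d)$, and $\sum_i a_i=O(\sqrt q)$. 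In the second sum $\mathrm{sign}(u_1),\dots,\mathrm{sign}(u_q)$ supply fresh independent signs --- up to the negligible dependence of $r_i$ on $\mathrm{sign}(u_i)$ through the self part --- so it is a Rademacher sum of size $O(\frac1{\sqrt q}\sqrt{\sum_i r_i^2})=O(\sqrt{\E[r^2]})=o(1)$. For a general $\sigma$ in the class one replaces the exact identity by its first‑order‑at‑the‑break‑point form, carries the per‑piece curvature as a controlled $O(\delta_{i,k}^2)$ remainder (again summed against $\sum_i a_i=O(\sqrt q)$, since its sign is fixed on each piece), and runs the same argument at each of the finitely many break‑points. I expect extracting these cancellations --- rather than bounding by absolute values --- to be the only delicate part; every other estimate is standard concentration.

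Putting the pieces together, on an event of probability $1-o(1)$ we have for every $i\in[m]$ that $y_ih_{W^+}(\x_i)\ge\tfrac{c_\sigma}{2}\ln d-O(\sqrt{\log d})-o(\ln d)=\Omega(\ln d)$, as claimed.
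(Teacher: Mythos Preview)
Your overall plan---linearize at the initial pre-activations $u_i=\inner{\bw_i,\x_k}$ and split into signal $S_k$, first-order cross noise $N_k$, and a curvature/kink remainder $\mathrm{Err}_k$---is different from the paper's, and the difference is exactly where your proof gets hard. The paper instead uses a \emph{leave-one-out} decomposition: with $\tilde G=\sum_{j\ne m}G^j$ it writes
\[
h_{W+\eta G}(\x_m)=\bigl[h_{W+\eta\tilde G+\eta G^m}(\x_m)-h_{W+\eta\tilde G}(\x_m)\bigr]+h_{W+\eta\tilde G}(\x_m),
\]
Taylor-expanding the bracket around $\inner{\bw_i+\eta\tilde\g_i,\x_m}$ rather than around $u_i$. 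The cross contribution is then the \emph{full} quantity $h_{W+\eta\tilde G}(\x_m)$, bounded by $O(\sqrt{\log d})$ via an operator-norm estimate $\|\eta\tilde G\|\le 2$ (epsilon-net plus Bernstein) together with Gaussian Lipschitz concentration in the fresh variable $\x_m$, which is independent of $\tilde G$. Crucially, the Taylor remainder now involves only the self-update $\inner{\eta\g^m_i,\x_m}=O(\ln d/\sqrt q)$, so the naive second-order bound $\frac{1}{\sqrt q}\sum_i O\bigl((\ln d/\sqrt q)^2\bigr)=O(\ln^2 d/\sqrt q)=O(1)$ suffices---no cancellation is needed at all. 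What you identify as ``the technical heart'' simply disappears.

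By contrast, your $\mathrm{Err}_k$ argument has a genuine gap. You assert that after conditioning on labels and on the $\inner{\x_j,\x_k}$ the $r_i$ are ``essentially i.i.d.\ across neurons and independent of the $a_i$'', and that the $\mathrm{sign}(u_i)$ are ``fresh independent signs''. Neither is straightforward: conditioning on $\inner{\x_j,\x_k}=\sum_{i'}\inner{\bw_{i'},\x_j}\inner{\bw_{i'},\x_k}+\inner{P^\perp\x_j,P^\perp\x_k}$ couples the neurons; the cross part of $\tilde\delta_{i,k}$ depends on all of $\x_k$ (through $\inner{\x_j,\x_k}$), hence is correlated with $u_i$; and for $\sigma=|\cdot|$ the self part of $\tilde\delta_{i,k}$ is $\frac{\ln d}{\sqrt q}y_k\,\mathrm{sign}(u_i)$, so $t_i$ and $r_i$ do depend on $s_i$. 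These dependencies may be weak enough to salvage the argument, but the proposal does not do so, and making it rigorous would take substantially more work than the paper's route. The missing idea is to linearize around the leave-one-out point $W+\eta\tilde G$ rather than around $W$: that single change turns the hard cancellation step into a triangle inequality, at the cost of proving an operator-norm bound on $\eta\tilde G$ and controlling $\E_{\x_m}h_{W+\eta\tilde G}(\x_m)$ via row-norm estimates.
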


\paragraph{Open Questions}
Several obvious open questions arise from our work: To generalize the result to stochastic gradient decent, to more activation functions (and in particular, to the ReLU activation), to non-Gaussian inputs, and to more initialization schemes.

\section{Proof of theorem \ref{thm:main}}

\subsection{Some Tail Inequalities}

Proof of all claims made in this section can be found in chapters 2 and 5 of \citet{vershynin2019high}.
For a reals random variable $X$ and $p\ge 1$ we denote
\[
\|X\|_{\Psi_p} = \inf\{t  : \E\exp\left(|X|^p/t^p\right) \le 2\}
\]
We say that $X$ is $\sigma$-Sub-gaussian if $\|X - \E X\|_{\Psi_2} \le \sigma$. Likewise, we say that $X$ is $\sigma$-Sub-exponential if $\|X - \E X\|_{\Psi_1} \le \sigma$. We will use the following facts.
In the following claims $c$ and $C$ denote positive universal constants.

\begin{lemma}\label{lem:sub_g_and_e}
\begin{enumerate}
\item $\| X - \E X\|_{\Psi_1} \le C \| X \|_{\Psi_1}$ and $\| X - \E X\|_{\Psi_2} \le C \| X \|_{\Psi_2}$ 
\item $\|XY\|_{\Psi_1} \le \|X\|_{\Psi_2}\|Y\|_{\Psi_2}$
\item If $X\sim \cn(0,\sigma)$ then $\|X\|_{\Psi_2} \le C\sigma$
\item $\|X\|_{\Psi_2} \le C\|X\|_\infty$
\item $\Pr\left(|X| \ge t \right) \le 2\exp\left(-ct^2/\|X\|_{\Psi_2}^2\right)$ 
\item $\Pr\left(|X| \ge t \right) \le 2\exp\left(-ct/\|X\|_{\Psi_1}\right)$ 
\end{enumerate}
\end{lemma}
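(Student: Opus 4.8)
These six items are the standard sub-gaussian/sub-exponential facts (indeed contained in \citet{vershynin2019high}), and I would derive all of them from two elementary tools: Markov's inequality applied to the random variable $\exp(|X|^p/t^p)$, and the variational definition of $\|X\|_{\Psi_p}$ as the least $t$ with $\E\exp(|X|^p/t^p)\le2$. Before anything else I would record that $t\mapsto\E\exp(|X|^p/t^p)$ is non-increasing and (by monotone convergence) continuous on $(0,\infty)$, so when $\|X\|_{\Psi_p}<\infty$ the infimum is attained; this lets me plug $t_0=\|X\|_{\Psi_p}$ directly into the arguments below, and otherwise one works with $\|X\|_{\Psi_p}+\epsilon$ and lets $\epsilon\downarrow0$.

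Items 5 and 6 are immediate from Markov: with $t_0=\|X\|_{\Psi_1}$, $\Pr(|X|\ge t)=\Pr(e^{|X|/t_0}\ge e^{t/t_0})\le e^{-t/t_0}\,\E e^{|X|/t_0}\le 2e^{-t/\|X\|_{\Psi_1}}$, and item 5 is the identical computation with squares. Item 4 follows by taking $t=\|X\|_\infty/\sqrt{\ln2}$, so that $\E e^{X^2/t^2}\le e^{\|X\|_\infty^2/t^2}=2$. Item 3 is a one-line Gaussian integral: for $X$ centered Gaussian, $\E e^{X^2/t^2}$ is an explicit decreasing function of $t$ (equal to $(1-2\,\mathrm{Var}(X)/t^2)^{-1/2}$ once $t$ is large enough) that crosses $2$ at $t=O(\sqrt{\mathrm{Var}(X)})$. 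For item 2 I would apply Young's inequality $ab\le\tfrac12(a^2+b^2)$ with $a=|X|/\|X\|_{\Psi_2}$, $b=|Y|/\|Y\|_{\Psi_2}$, and then AM--GM in the form $e^{(u+v)/2}=\sqrt{e^ue^v}\le\tfrac12(e^u+e^v)$, to get $\E\exp\!\big(|XY|/(\|X\|_{\Psi_2}\|Y\|_{\Psi_2})\big)\le\tfrac12\cdot2+\tfrac12\cdot2=2$.

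For item 1 (centering), the two ingredients are that $\|\cdot\|_{\Psi_p}$ is genuinely a norm for $p\ge1$ (since $x\mapsto e^{x^p}-1$ is convex on $[0,\infty)$, making this a Luxemburg norm) and that a constant $a$ satisfies $\|a\|_{\Psi_p}=|a|/(\ln2)^{1/p}$; hence $\|X-\E X\|_{\Psi_p}\le\|X\|_{\Psi_p}+|\E X|/(\ln2)^{1/p}$. Then $|\E X|\le\E|X|=\int_0^\infty\Pr(|X|>s)\,ds\le\int_0^\infty 2e^{-s/\|X\|_{\Psi_1}}\,ds=2\|X\|_{\Psi_1}$ by item 6, which settles the $\Psi_1$ case; for the $\Psi_2$ case I would additionally establish the comparison $\|X\|_{\Psi_1}\le\|X\|_{\Psi_2}/\sqrt{\ln2}$ (from $|X|/s\le\tfrac12 X^2/\|X\|_{\Psi_2}^2+\tfrac12\|X\|_{\Psi_2}^2/s^2$, Jensen, and the choice $s=\|X\|_{\Psi_2}/\sqrt{\ln2}$) and combine it with the $\E|X|$ bound. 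There is no substantive obstacle here — everything reduces to Markov's inequality, Young's inequality, and one Gaussian integral — so the only points requiring care are the possible non-attainment of the infimum defining $\|\cdot\|_{\Psi_p}$ (handled by the continuity remark above) and the verification that $\|\cdot\|_{\Psi_p}$ is a norm, which is what legitimizes the triangle-inequality step in item 1.
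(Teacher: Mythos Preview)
Your arguments are correct in every item, and they are essentially the standard proofs one finds in \citet{vershynin2019high}. Note that the paper does not actually give its own proof of this lemma --- it simply defers to chapters 2 and 5 of that reference --- so there is no alternative approach to compare against; your write-up just makes explicit what the citation points to.
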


\begin{theorem}[Hoeffding]\label{thm:hoef}
For independent and centered real random variables $X_1,\ldots,X_N$ we have
\[
\left\|\sum_{i=1}^NX_i\right\|^2_{\Psi_2} \le C\sum_{i=1}^N\left\|X_i\right\|^2_{\Psi_2}
\]
In particular, 
\[
\Pr\left(\left|\sum_{i=1}^NX_i\right| \ge t \right) \le 2\exp\left(-\frac{ct^2}{\sum_{i=1}^N\left\|X_i\right\|^2_{\Psi_2}}\right)
\]
\end{theorem}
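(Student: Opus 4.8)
The plan is to compute $h_{W^+}(\x_k)$ explicitly and isolate a single dominant ``signal'' term of size $\Theta(\ln d)$, controlling all remaining terms as $o(\ln d)$ via the sub-gaussian/sub-exponential machinery of Section~2.1. First I would write out the gradient step. At initialization the loss on example $i$ is $\ell(h_W(\x_i),y_i) = (\ln d - y_i h_W(\x_i))_+$; since $W$ is orthonormal and $\sigma$ is $O(1)$-Lipschitz with $\E\sigma'=0$, one checks that $h_W(\x_i) = O(\sqrt{\ln m})$ w.h.p., hence $h_W(\x_i)\ll\ln d$ and \emph{every} example is active, so the derivative of the loss in $\hat y$ is exactly $-y_i$. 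Therefore
\[
\bw_j^+ = \bw_j + \frac{\eta}{\sqrt q}\,\frac{1}{m}\sum_{i=1}^m y_i a_j \sigma'(\inner{\bw_j,\x_i})\,\x_i .
\]
(Here I take the empirical loss to be the average over the $m$ examples; the step size $\eta = m\ln(d)/d$ is calibrated to this.) Plugging into $h_{W^+}$ and expanding $\sigma(\inner{\bw_j^+,\x_k})$ to first order around $\inner{\bw_j,\x_k}$, with a second-order Lagrange remainder controlled by item (2) on $\sigma$, gives
\[
h_{W^+}(\x_k) = h_W(\x_k) + \frac{\eta}{m q}\sum_{j=1}^q \sum_{i=1}^m y_i\,\sigma'(\inner{\bw_j,\x_k})\,\sigma'(\inner{\bw_j,\x_i})\,\inner{\x_i,\x_k} + (\text{remainder}).
\]

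Next I would split the double sum into the diagonal term $i=k$ and the off-diagonal terms $i\ne k$. For $i=k$, the contribution is $\frac{\eta}{mq}\,y_k\sum_j \sigma'(\inner{\bw_j,\x_k})^2\,\|\x_k\|^2$. Since $\|\x_k\|^2 = (1+o(1))d$ w.h.p. and $\sum_j\sigma'(\inner{\bw_j,\x_k})^2 = (1+o(1))\,q\,\E_{X\sim\cn(0,1)}\sigma'(X)^2$ by Hoeffding over the $q$ (conditionally independent, using orthonormality of $W$) coordinates — note $\E\sigma'(X)^2$ is a positive constant bounded away from $0$ because $\sigma$ is Lipschitz and genuinely nonlinear (e.g. $|{\cdot}|$ has $\sigma'=\pm1$, so this equals $1$) — the diagonal term equals $(1+o(1))\,\frac{\eta d}{m}\,y_k\cdot\Theta(1) = \Theta(\ln d)\cdot y_k$ by choice of $\eta$. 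Multiplying by $y_k$ yields the claimed $\Omega(\ln d)$ signal, \emph{provided} the off-diagonal sum and the Taylor remainder are $o(\ln d)$, and this holds simultaneously for all $k\in[m]$ after a union bound.

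The off-diagonal term is $Z_k := \frac{\eta}{mq}\sum_{j}\sum_{i\ne k} y_i\,\sigma'(\inner{\bw_j,\x_k})\sigma'(\inner{\bw_j,\x_i})\inner{\x_i,\x_k}$. Conditioning on $\x_k$, each summand over $i$ is a centered (because $\E y_i = 0$) random variable; moreover $\inner{\x_i,\x_k}$ is $O(\sqrt d)$-sub-gaussian and $\sigma'$ is bounded, so via Lemma~\ref{lem:sub_g_and_e}(2) and Hoeffding (Theorem~\ref{thm:hoef}) over the $m-1$ independent $i$'s — and a further application over the $q$ near-independent $j$'s after using the condition $\sum a_j = O(\sqrt q)$ to handle the cross structure — one gets $\|Z_k\|_{\Psi_2} = O\!\big(\tfrac{\eta}{mq}\sqrt{mq}\cdot\sqrt d\big) = O\!\big(\tfrac{\eta\sqrt d}{\sqrt{mq}}\big) = O\!\big(\tfrac{\ln d}{\sqrt{\,m q\,}}\cdot\sqrt{d/d}\cdot\sqrt{d}\big)$; plugging $\eta = m\ln d/d$ gives $O(\ln d\,\sqrt{m/(qd)})$, which is $o(\ln d/\log d)$ precisely because $m\le dq/\log^4 d$. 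A union bound over $k\in[m]$ then costs only a $\sqrt{\ln m}=O(\sqrt{\log d})$ factor, still leaving $o(\ln d)$. The Taylor remainder is handled similarly: it is bounded by $\frac{\eta^2}{m^2 q^2}\sum_j (\sup|\sigma''|)\big(\sum_i|\sigma'||\inner{\x_i,\x_k}|\big)^2$, which after the same concentration estimates is of lower order.

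The main obstacle I anticipate is the off-diagonal/remainder control: the double sum over $(i,j)$ couples the randomness of $\x_k$ (appearing in every summand through $\inner{\bw_j,\x_k}$ and $\inner{\x_i,\x_k}$) with that of the other $\x_i$'s and the fixed frame $W$, so one cannot simply apply Hoeffding once. The right route is to \emph{condition on $\x_k$ first}, reducing to a sum of conditionally independent terms in $i$, apply Theorem~\ref{thm:hoef} there, then take expectation/concentration over $\x_k$ and over the $q$ neurons, using orthonormality of $W$ to argue the $q$ projections $\inner{\bw_j,\cdot}$ behave independently and using $\sum_j a_j = O(\sqrt q)$ to kill the otherwise-linear-in-$q$ cross terms between the $a_j\sigma'$ factors. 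Tracking the $\log$ powers carefully through these nested tail bounds — and verifying that the constant $\E_{X\sim\cn(0,1)}\sigma'(X)^2$ in the signal term is bounded below — is where the $\log^4(d)$ slack in the hypothesis $m\le dq/\log^4 d$ and $q\ge\log^4 d$ gets consumed.
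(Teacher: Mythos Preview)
Your proposal does not address the stated theorem at all. The statement in question is Theorem~\ref{thm:hoef} (Hoeffding's inequality for sums of independent centered sub-gaussian variables), a standard concentration result which the paper does \emph{not} prove in-text; Section~2.1 explicitly refers all proofs of these tail inequalities to chapters~2 and~5 of Vershynin's book. A correct ``proof attempt'' here would either reproduce the standard argument (e.g.\ bound the moment-generating function of each centered $X_i$ by $\exp(C\lambda^2\|X_i\|_{\Psi_2}^2)$ using the equivalence of sub-gaussian characterizations, multiply by independence, and apply Markov) or simply cite the reference.

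What you have written is instead a proof sketch for the \emph{main} result, Theorem~\ref{thm:main} / Lemma~\ref{lem:main}: you expand $h_{W^+}(\x_k)$ after one gradient step, isolate the $i=k$ ``signal'' term of size $\Theta(\ln d)$, and try to bound the off-diagonal sum and Taylor remainder. That outline is broadly in the spirit of the paper's actual proof of Lemma~\ref{lem:main} (which decomposes $h_{W+\eta G}(\x_m)$ into $h_{W+\eta\tilde G}(\x_m)$ and the increment due to $G_m$, and uses Lemmas~\ref{lem:G_norm} and~\ref{lem:g_norm} to control the pieces), but it is simply attached to the wrong statement. Hoeffding's inequality is a \emph{tool} you invoke several times in your sketch; it is not what you were asked to prove.
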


\begin{theorem}[Bernstein]\label{thm:bern}
For independent and centered real random variables $X_1,\ldots,X_N$ we have
\[
\Pr\left(\left|\sum_{i=1}^NX_i\right| \ge t \right) \le 2\exp\left(-c\min\left(\frac{t^2}{\sum_{i=1}^N\left\|X_i\right\|^2_{\Psi_1}},
\frac{t}{\max_i\left\|X_i\right\|_{\Psi_1}}
\right)\right)
\]
\end{theorem}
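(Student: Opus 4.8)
The plan is the classical Chernoff / Laplace-transform argument for sums of independent sub-exponential variables. Write $K_i \eqdef \|X_i\|_{\Psi_1}$, $K \eqdef \max_i K_i$, and $S \eqdef \sum_{i=1}^N K_i^2$. The whole proof reduces to a single analytic estimate on the moment generating function of one summand, which I would establish first.

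\emph{Step 1 (MGF of a centered sub-exponential variable).} I claim that if $X$ is centered with $\|X\|_{\Psi_1}\le K$, then for every $|\lambda|\le \frac{1}{2K}$ one has $\E e^{\lambda X} \le e^{4\lambda^2 K^2}$. The definition of $\|\cdot\|_{\Psi_1}$ gives $\E e^{|X|/K}\le 2$, and since $x^p\le p!\,e^x$ for $x\ge 0$ this yields the moment bound $\E|X|^p\le 2\,p!\,K^p$ for all integers $p\ge 1$. Expanding $e^{\lambda X}$ as a power series, the linear term vanishes because $\E X=0$, and $|\lambda^p\E X^p/p!|\le 2(|\lambda|K)^p$, so $\sum_{p\ge 2}\lambda^p\E X^p/p!$ is dominated by a geometric series of ratio $|\lambda|K\le \tfrac12$, whose sum is at most $4\lambda^2K^2$; then $1+x\le e^x$ closes the estimate.

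\emph{Step 2 (tensorize and apply Chernoff).} By independence $\E e^{\lambda\sum_i X_i}=\prod_i\E e^{\lambda X_i}$, so for $0<\lambda\le\frac{1}{2K}$ (which makes Step 1 applicable to every summand at once) this is at most $\exp(4\lambda^2 S)$. Markov's inequality applied to $e^{\lambda\sum_i X_i}$ then gives
\[
\Pr\!\Big(\sum_{i=1}^N X_i \ge t\Big)\le \exp\big(-\lambda t + 4\lambda^2 S\big),\qquad 0<\lambda\le\tfrac{1}{2K}.
\]
Choosing $\lambda=\min\!\big(\frac{t}{8S},\frac{1}{2K}\big)$, there are two cases: if the minimum is $\frac{t}{8S}$ the exponent equals $-\frac{t^2}{16S}$; if the minimum is $\frac{1}{2K}<\frac{t}{8S}$ then $4\lambda^2 S<\frac{\lambda t}{2}$, so the exponent is less than $-\frac{\lambda t}{2}=-\frac{t}{4K}$. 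In either case the exponent is at most $-\frac{1}{16}\min\!\big(\frac{t^2}{S},\frac{t}{K}\big)$.

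\emph{Step 3 (symmetrize).} Running the same argument with $-X_1,\dots,-X_N$ in place of $X_1,\dots,X_N$ — these are still independent, centered, and have the same $\Psi_1$ norms — bounds $\Pr(\sum_i X_i\le -t)$ by the same quantity; a union bound over the two one-sided tails produces the factor $2$ and the stated inequality with $c=\frac{1}{16}$ (any smaller universal constant works). The only genuinely nontrivial step is Step 1: turning the $\Psi_1$ tail control into a clean quadratic MGF bound valid on the window $|\lambda|\le c/K$; Steps 2 and 3 are routine bookkeeping, and since the $X_i$ are assumed already centered, item 1 of Lemma \ref{lem:sub_g_and_e} is not needed here (it would only be required for an uncentered variant).
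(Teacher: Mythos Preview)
Your proof is correct and is the standard Chernoff/MGF argument for sums of independent sub-exponential variables. The paper does not supply its own proof of this statement but defers to Vershynin's \emph{High-Dimensional Probability}, where exactly this route---the MGF bound $\E e^{\lambda X}\le e^{C\lambda^2 K^2}$ on $|\lambda|\le c/K$, tensorization by independence, optimization of $\lambda$, and a union bound over the two tails---is what is presented.
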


\begin{theorem}[Gaussian Concentration]\label{thm:gauss}
Suppose that $X\sim\cn(0,I_n)$ and that $f:\reals^n\to\reals$ is $L$-Lipschitz. Then
\[
\|f(X) - \E f(X) \|_{\Psi_2} \le CL
\]
\end{theorem}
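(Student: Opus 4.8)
The strategy is to compute the single gradient step explicitly and then show that, for each example $i$, the dominant term in $y_i h_{W^+}(\x_i)$ is a sum of $q$ roughly independent, roughly equal-sized contributions of order $\frac{\eta}{\sqrt q}\|\x_i\|^2 \approx \frac{\eta d}{\sqrt q}$, which is of order $\sqrt{q}\ln(d)$ by our choice of $\eta = \frac{m\ln d}{d}$ and the bound $m \le \frac{dq}{\log^4 d}$ (so this is at least $\log^2(d)\cdot \ln(d)$, comfortably above the error terms). I would begin by writing $h_{W^+}(\x) = \frac{1}{\sqrt q}\sum_{j=1}^q a_j\sigma(\inner{\bw_j^+,\x})$, where $\bw_j^+ = \bw_j - \eta\, \partial_{\bw_j} \hat L(W)$ and $\hat L(W) = \frac 1m\sum_k \ell(h_W(\x_k),y_k)$. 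Since at initialization $W$ is orthonormal and $\sigma$ is $O(1)$-Lipschitz, one checks $|h_W(\x_k)| = O(\|\x_k\|/\sqrt q)\cdot$(bounded) is $o(\ln d)$ w.h.p., so every hinge term is active, $\ell'(h_W(\x_k),y_k) = -y_k$, and the gradient has the clean form $\partial_{\bw_j}\hat L(W) = -\frac{a_j}{m\sqrt q}\sum_k y_k\,\sigma'(\inner{\bw_j,\x_k})\,\x_k$. Hence $\inner{\bw_j^+,\x_i} = \inner{\bw_j,\x_i} + \frac{\eta a_j}{m\sqrt q}\sum_k y_k\,\sigma'(\inner{\bw_j,\x_k})\,\inner{\x_k,\x_i}$.

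Next I would Taylor-expand $\sigma(\inner{\bw_j^+,\x_i})$ around $\inner{\bw_j,\x_i}$. The increment $\delta_{ij} := \inner{\bw_j^+,\x_i}-\inner{\bw_j,\x_i}$ splits into the diagonal term $k=i$, namely $\frac{\eta a_j}{m\sqrt q}y_i\sigma'(\inner{\bw_j,\x_i})\|\x_i\|^2$, and an off-diagonal sum over $k\ne i$. Multiplying by $\frac{a_j}{\sqrt q}$ and summing over $j$, the diagonal term yields $\frac{\eta y_i}{mq}\|\x_i\|^2\sum_j \sigma'(\inner{\bw_j,\x_i})\sigma'(\ldots)$-type structure; more precisely, using $\sigma(z+\delta) = \sigma(z) + \sigma'(z)\delta + O(\delta^2)$ on each smooth piece (the finitely-many-pieces and bounded-second-derivative hypotheses control the remainder, with the measure-zero kink locations handled by a union bound since $\inner{\bw_j,\x_i}$ has a density), the main contribution to $y_i h_{W^+}(\x_i)$ is
\[
\frac{\eta\,\|\x_i\|^2}{m\,q}\sum_{j=1}^q \sigma'(\inner{\bw_j,\x_i})^2 \;\approx\; \frac{\eta d}{m}\,\E_{X\sim\cn(0,1)}\sigma'(X)^2 \;=\; \ln(d)\cdot\Theta(1),
\]
using $\|\x_i\|^2 = d(1+o(1))$ (sub-exponential concentration, Lemma \ref{lem:sub_g_and_e} / Bernstein) and concentration of $\frac1q\sum_j\sigma'(\inner{\bw_j,\x_i})^2$ around its mean (the $\inner{\bw_j,\x_i}$ are jointly Gaussian with identity covariance since $W$ is orthonormal, so this is a Lipschitz-ish function handled via Theorem \ref{thm:gauss} after truncation, or directly via Hoeffding/Bernstein on the bounded summands $\sigma'(\inner{\bw_j,\x_i})^2$). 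The hypothesis $\E\sigma'(X)=0$ is \emph{not} needed for this main term — it is needed to kill a bias term in the off-diagonal part, see below.

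The remaining work is to show every other contribution is $o(\ln d)$. These are: (a) the initialization term $\frac{1}{\sqrt q}\sum_j a_j\sigma(\inner{\bw_j,\x_i})$, which is $O(1)$ w.h.p. by Hoeffding plus $\sum_j a_j = O(\sqrt q)$; (b) the off-diagonal first-order term $\frac{\eta}{mq}\sum_{j}\sum_{k\ne i}y_iy_k\,\sigma'(\inner{\bw_j,\x_k})\sigma'(\inner{\bw_j,\x_i})\inner{\x_k,\x_i}$ — conditioning on $\x_i$ and on $W$, this is a sum over $k$ of mean-zero (here is where $\E\sigma'(X)=0$ enters, after also using independence of the $\x_k$'s) sub-exponential variables, so Bernstein gives it is $\tilde O\!\big(\frac{\eta}{mq}\cdot\sqrt{m}\cdot \sqrt{q}\cdot \sqrt d\cdot\sqrt q\big) = \tilde O\!\big(\frac{\eta\sqrt{d}}{\sqrt m}\big) = \tilde O(\sqrt{\ln d}\cdot d^{?})$ — I would need to be careful here and this is where the $\log^4(d)$ slack and $m\le dq/\log^4 d$ get consumed; and (c) the second-order Taylor remainder $\frac{1}{\sqrt q}\sum_j a_j O(\delta_{ij}^2)$, bounded using $|\delta_{ij}| = \tilde O(\eta\sqrt d/\sqrt m \cdot 1/\sqrt q\cdot\ldots)$ from the same concentration estimates. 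The \textbf{main obstacle} is term (b): one must show the cross-correlations between different examples, aggregated over all $q$ neurons, do not overwhelm the diagonal signal, and this is precisely the computation that forces the near-optimal threshold $m = \tilde\Theta(dq)$ and requires $\E\sigma'(X) = 0$ to remove the leading-order bias. Once (a)–(c) are each shown to be $o(\ln d)$ with probability $1-o(1/m)$ uniformly in $i$ (union bound over the $m$ examples, which is why we need the failure probabilities to beat $1/m = \Omega(\log^4(d)/(dq))$), the theorem follows.
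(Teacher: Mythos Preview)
First, the statement you were asked to address is Theorem~\ref{thm:gauss} (Gaussian concentration for Lipschitz functions), which the paper does \emph{not} prove: it is quoted as a standard result from Vershynin. Your proposal is instead a proof plan for Theorem~\ref{thm:main}, so I evaluate it as such.

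Your high-level strategy---isolate the ``diagonal'' contribution $k=i$ from the gradient, show it gives $\Theta(\ln d)$, and bound everything else as $o(\ln d)$---is the same as the paper's. The decomposition, however, is different. You Taylor-expand $\sigma(\inner{\bw_j^+,\x_i})$ around the \emph{initial} pre-activation $\inner{\bw_j,\x_i}$, so the entire off-diagonal gradient shows up as the first-order cross term (your item~(b)), which you flag as the ``main obstacle'' and do not resolve. The paper instead writes $G=\tilde G+G^m$ (leave-one-out), proves the \emph{operator-norm} bound $\|\eta\tilde G\|\le 2$ via an $\epsilon$-net and Bernstein (Lemma~\ref{lem:G_norm}; this is exactly where $\E\sigma'(X)=0$ is used, to make $\inner{\sigma'(W\x_i),\bu}$ a centered sum and hence $O(1)$-sub-Gaussian), and then observes that $h_{W+\eta\tilde G}$ is $O(1)$-Lipschitz, so Theorem~\ref{thm:gauss} gives $|h_{W+\eta\tilde G}(\x_m)|=O(\sqrt{\log d})$ in one shot, with no per-coordinate cross-term bookkeeping. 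Only afterwards does the paper Taylor-expand, around $\inner{\bw_j+\eta\tilde\g_j,\x_m}$, to extract the single-example contribution $G^m$. What your route buys is avoiding the $\epsilon$-net lemma; what it costs is the unresolved step~(b).

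Two concrete issues with (b). First, you say ``here is where $\E\sigma'(X)=0$ enters'' to get mean zero, but the sum over $k\ne i$ is already centered by the Rademacher $y_k$; the real role of $\E\sigma'(X)=0$ is to control the \emph{scale}: without it, $\sum_j\sigma'(\inner{\bw_j,\x_i})\sigma'(\inner{\bw_j,\x_k})$ is of order $q$ rather than $\sqrt q$, and the resulting Bernstein bound is $\Omega(\sqrt{q}/\log d)$, not $o(\log d)$. Second, $\inner{\x_k,\x_i}$ and $\sum_j\sigma'(\inner{\bw_j,\x_i})\sigma'(\inner{\bw_j,\x_k})$ are both functions of $\x_k$ and are correlated, so your ``sub-exponential'' claim for their product is not immediate. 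Your approach is plausibly salvageable, but as written it is missing precisely the argument that the paper's operator-norm-plus-Gaussian-concentration route supplies.
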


\subsection{Proof}

We first note that 
\begin{lemma}
W.p. $1-o(1)$, for all $i\in [m]$, $|h_W(\x_i)|<O\left(\sqrt{\ln(d)}\right)$.
\end{lemma}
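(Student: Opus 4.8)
The plan is to note that, for each fixed example, $\x\mapsto h_W(\x)$ is an $O(1)$-Lipschitz function of a standard Gaussian, obtain $O(1)$-scale sub-gaussian concentration around an $O(1)$ mean, and then take a union bound over $i\in[m]$. Fix $i\in[m]$; the only randomness is $\x_i\sim\cn(0,I_d)$, since $W$ is fixed and we may condition on the high-probability event $\sum_{j=1}^q a_j = O(\sqrt q)$. For a.e.\ $\x$,
\[
\nabla_{\x}\, h_W(\x) \;=\; \frac{1}{\sqrt q}\sum_{j=1}^q a_j\,\sigma'(\inner{\bw_j,\x})\,\bw_j,
\qquad
\bigl\|\nabla_{\x}\, h_W(\x)\bigr\|^2 \;=\; \frac{1}{q}\sum_{j=1}^q \sigma'(\inner{\bw_j,\x})^2 \;=\; O(1),
\]
where the middle equality uses orthonormality of the rows $\bw_1,\dots,\bw_q$ and the last bound uses that $\sigma$ is $O(1)$-Lipschitz, so $|\sigma'|=O(1)$ wherever it is defined. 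Thus $h_W$ is $O(1)$-Lipschitz on $\reals^d$, and Theorem~\ref{thm:gauss} gives $\bigl\|h_W(\x_i)-\E h_W(\x_i)\bigr\|_{\Psi_2}=O(1)$.

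Next, the mean is small: writing $\mu:=\E_{Z\sim\cn(0,1)}\sigma(Z)$ and using that each $\inner{\bw_j,\x_i}\sim\cn(0,1)$, we have $\E h_W(\x_i)=\frac{\mu}{\sqrt q}\sum_{j=1}^q a_j$; here $|\mu|\le |\sigma(0)|+O(\E|Z|)=O(1)$ by Lipschitzness and $\frac{1}{\sqrt q}\sum_j a_j=O(1)$ by assumption, so $|\E h_W(\x_i)|=O(1)$. Hence $h_W(\x_i)$ is an $O(1)$-sub-gaussian random variable whose mean is $O(1)$, and Lemma~\ref{lem:sub_g_and_e}(5) yields, for a large enough constant $C'$, $\Pr\bigl(|h_W(\x_i)|\ge C'\sqrt{\ln d}\bigr)\le d^{-\Omega(C'^2)}$.

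A union bound over $i\in[m]$ then bounds the total failure probability by $m\cdot d^{-\Omega(C'^2)}$. Since the rows of $W$ are orthonormal, $q\le d$, hence $m\le dq/\log^4 d\le d^2$, so a suitable constant $C'=C'(\sigma)$ makes this $o(1)$, which is exactly the claim. I do not expect a genuine obstacle here: this is essentially a one-line Gaussian-concentration estimate, the only twist being that orthonormality of the rows keeps the a.e.-gradient of $h_W$ at scale $O(1)$ despite the $1/\sqrt q$ scaling and the $q$-fold sum. The only role played by the problem's parameters is the crude bound $m\le d^2$, which pins the fluctuation scale in the statement at $\sqrt{\ln d}$; conditions (2) and (3) on $\sigma$, the step size $\eta$, the loss $\ell$, and the tight relation $m\le dq/\log^4 d$ are not needed for this lemma and are reserved for the analysis of the gradient step.
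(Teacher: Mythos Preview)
Your proof is correct. The paper follows the same template---show that $h_W(\x)$ is $O(1)$-sub-gaussian with $O(1)$ mean, then take a union bound over $i\in[m]$---but it establishes the sub-gaussianity by a slightly different mechanism: it uses orthonormality of $W$ to conclude that $\inner{\bw_1,\x},\dots,\inner{\bw_q,\x}$ are \emph{independent} standard Gaussians, applies one-dimensional Gaussian concentration to each term $\frac{a_j}{\sqrt q}\sigma(\inner{\bw_j,\x})$, and then combines via Hoeffding. You instead use orthonormality to bound $\|\nabla_\x h_W(\x)\|$ directly and apply $d$-dimensional Gaussian concentration in one shot. Both routes are equally short; yours is arguably tidier (one concentration step instead of two), while the paper's makes explicit the independence of the $\inner{\bw_j,\x}$, a fact that is reused in later lemmas. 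The mean computation and the union bound are identical in the two arguments (one minor remark: in the paper's setup $\sum_j a_j=O(\sqrt q)$ is a standing assumption on the fixed $a_j$'s, not a random event to condition on).
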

\begin{proof}
If $\x\sim\cn(0,I_d)$ then, since $W$ is orthogonal, $\inner{\bw_1,\x},\ldots,\inner{\bw_q,\x}$ are independent standard Gaussians. Hence, since $\sigma$ is $O(1)$-Lipschitz and by theorem \ref{thm:gauss}, 
$h_W(\x) = \frac{1}{\sqrt{q}}\sum_{i=1}^qa_i\sigma(\inner{\bw_i,\x})$ is a sum of $q$ 
independent $O\left(\frac{1}{\sqrt{q}}\right)$-subgaussians. By theorem \ref{thm:hoef}, $h_W(\x)$  is $O(1)$-subgaussian.
Likewise,  $\E_\x h_W(\x) = \frac{\sum_{i=1}^qa_i}{\sqrt{q}}\E_{X\sim\cn}\sigma(X) = \frac{O(\sqrt{q})}{\sqrt{q}}O(1) = O(1)$. By lemma \ref{lem:sub_g_and_e}, 
for large enough universal constant $C>0$, $\Pr\left(|h_W(\x)|>C\sqrt{\log(d)}\right) < \frac{1}{m^2}$. It follows that $\Pr\left(\exists i\in [m],|h_W(\x_i)|>C\sqrt{\log(d)}\right) < \frac{1}{m}$.
\end{proof}
 
It follows that w.p. $1-o(1)$, for all examples, the hinge loss is in the non-zero part, and we have that $W^+ = W + \eta G$ where
\[
G = \sum_{i=1}^m G^i\text{ for }G^i = \frac{1}{m\sqrt{q}} y_i\diag(\ba)\sigma'(W\x_i)\x_i^T
\]
It is therefore enough to prove the following lemma:
\begin{lemma}\label{lem:main}
Assume that $m \le \frac{dq}{\log^4(d)}$ and that $q\ge \log^4(d)$. We have that w.p. $1-o(1)$, for every $i\in [m]$, $y_ih_{W+ \eta G}(\x_{i}) = \Omega\left( \ln(d) \right)$.
\end{lemma}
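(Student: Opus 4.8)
The plan is to analyze $y_ih_{W+\eta G}(\x_i)$ for a fixed $i$ by linearizing the network around $W$, and then union bound over $i\in[m]$. Write $\delta_j=\eta(G\x_i)_j$ for the change in the $j$-th pre-activation on $\x_i$, i.e.\ the $j$-th coordinate of $\eta G\x_i\in\reals^q$. A first-order expansion of $\sigma$ around $\inner{\bw_j,\x_i}$ gives
\[
y_ih_{W+\eta G}(\x_i)=\underbrace{y_ih_W(\x_i)}_{(\mathrm{I})}+\underbrace{\frac{y_i}{\sqrt q}\sum_{j=1}^q a_j\sigma'(\inner{\bw_j,\x_i})\,\delta_j}_{(\mathrm{II})}+\underbrace{E_i}_{(\mathrm{III})},
\]
with $E_i=\frac{y_i}{\sqrt q}\sum_j a_j\bigl(\sigma(\inner{\bw_j,\x_i}+\delta_j)-\sigma(\inner{\bw_j,\x_i})-\sigma'(\inner{\bw_j,\x_i})\delta_j\bigr)$. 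By the preceding lemma $|(\mathrm{I})|=O(\sqrt{\ln d})=o(\ln d)$, so it suffices to show $(\mathrm{II})=\Theta(\ln d)$ with a positive sign and $(\mathrm{III})=o(\ln d)$.

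For $(\mathrm{II})$, substitute $(G\x_i)_j=\frac{a_j}{m\sqrt q}\sum_{k}y_k\inner{\x_k,\x_i}\sigma'(\inner{\bw_j,\x_k})$ and use $a_j^2=1$ to obtain
\[
(\mathrm{II})=\frac{\eta}{mq}\|\x_i\|^2\sum_{j}\sigma'(\inner{\bw_j,\x_i})^2+\frac{y_i\eta}{mq}\sum_{k\neq i}y_k\inner{\x_k,\x_i}\Bigl(\sum_{j}\sigma'(\inner{\bw_j,\x_i})\sigma'(\inner{\bw_j,\x_k})\Bigr).
\]
The diagonal term is the signal: since $W$ is orthonormal, $(\inner{\bw_j,\x_i})_{j=1}^q\sim\cn(0,I_q)$ are i.i.d.\ standard Gaussians, so by Theorem~\ref{thm:hoef} (as $\sigma'$ is bounded) $\sum_j\sigma'(\inner{\bw_j,\x_i})^2$ concentrates around $q\,\E_{X\sim\cn(0,1)}\sigma'(X)^2$, while $\|\x_i\|^2$ concentrates around $d$ and $\frac{\eta d}{m}=\ln d$; since $\E_{X\sim\cn(0,1)}\sigma'(X)^2>0$ (because $\sigma$ is non-constant) the diagonal term is $\Theta(\ln d)$ and positive. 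For the cross term I would condition on all the $\x$'s, on $W$ and on $y_i$, and view it as a Rademacher sum in the remaining $y_k$: the inner sum $\sum_j\sigma'(\inner{\bw_j,\x_i})\sigma'(\inner{\bw_j,\x_k})$ is a sum of $q$ i.i.d.\ bounded terms that are \emph{centered} precisely because $\E_{X\sim\cn(0,1)}\sigma'(X)=0$ (assumption~(3)), hence it is $O(\sqrt q)$-subgaussian; likewise $\inner{\x_k,\x_i}$ is $O(\sqrt d)$-subgaussian, so by Lemma~\ref{lem:sub_g_and_e} each product is $O(\sqrt{dq})$-subexponential and, by Theorem~\ref{thm:bern}, the whole sum is $\lesssim\sqrt{mdq}\cdot\mathrm{polylog}(d)$ except with probability $d^{-\omega(1)}$. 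Multiplying by $\frac{\eta}{mq}$ this becomes $\lesssim\ln d\cdot\sqrt{m/(dq)}\cdot\mathrm{polylog}(d)$, which is $o(\ln d)$ by the hypothesis $m\le dq/\log^4 d$.

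The crux is term $(\mathrm{III})$. First I would show the perturbation is small: expanding $(G\x_i)_j$ and concentrating (the dominant contribution is the self-interaction $k=i$), with high probability $\max_j|\delta_j|=o(1)$ and $\sum_j\delta_j^2=O\bigl((\ln d)^2(1+m/d)\bigr)$. Split the neurons into those whose segment $[\inner{\bw_j,\x_i},\,\inner{\bw_j,\x_i}+\delta_j]$ avoids every kink of $\sigma$ and those that straddle a kink. For the former, the uniform bound on $\sigma''$ on each piece makes the corresponding summand of $E_i$ of order $\delta_j^2$, so their total contribution is $O\bigl(\frac{1}{\sqrt q}\sum_j\delta_j^2\bigr)$, which the hypotheses $q\ge\log^4 d$ and $m\le dq/\log^4 d$ drive to $o(\ln d)$ — this is the step needing the most care. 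For the latter, the summand is only $O(|\delta_j|)\le O(\max_j|\delta_j|)$ (using that $\sigma$ is $O(1)$-Lipschitz), but since the $\inner{\bw_j,\x_i}$ are i.i.d.\ standard Gaussians, Gaussian anti-concentration shows at most an $O(\max_j|\delta_j|)$-fraction of the $q$ neurons fall within $\max_j|\delta_j|$ of any of the finitely many kinks, so their contribution is $O\bigl(\sqrt q(\max_j|\delta_j|)^2\bigr)=o(\ln d)$. Hence $|E_i|=o(\ln d)$.

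Putting the three pieces together, $y_ih_{W+\eta G}(\x_i)=\bigl(\E_{X\sim\cn(0,1)}\sigma'(X)^2\bigr)\ln d\cdot(1\pm o(1))\pm O(\sqrt{\ln d})\pm o(\ln d)=\Omega(\ln d)$. Each concentration event invoked above can be made to fail with probability $o(1/m)$ once the $\mathrm{polylog}(d)$ slack in $m\le dq/\log^4 d$ and $q\ge\log^4 d$ is spent, so a union bound over $i\in[m]$ gives the claim. The main obstacle is $(\mathrm{III})$: keeping the linearization error below the $\Theta(\ln d)$ signal while $\sigma$ is only piecewise twice differentiable is exactly what forces the near-ReLU hypotheses (bounded $\sigma''$ on pieces, and the shape condition $\E_{\cn(0,1)}\sigma'=0$, which also makes the cross term of $(\mathrm{II})$ vanish in expectation) together with the lower bound $q\ge\log^4 d$.
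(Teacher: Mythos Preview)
Your decomposition and the treatment of (I) and of the diagonal part of (II) are fine, but step (III) has a real gap. You bound the remainder by $|E_i|\le \frac{C}{\sqrt q}\sum_{j}\delta_j^2$ and (correctly) estimate $\sum_j\delta_j^2=O\bigl(\ln^2 d\,(1+m/d)\bigr)$, and then assert that the hypotheses drive this to $o(\ln d)$. They do not: at the extremal $m=dq/\log^4 d$ one gets
\[
\frac{1}{\sqrt q}\cdot \ln^2 d\cdot\frac{m}{d}
\;=\;\frac{1}{\sqrt q}\cdot\frac{q}{\log^2 d}
\;=\;\frac{\sqrt q}{\log^2 d},
\]
which is $o(\ln d)$ only when $q=o(\log^6 d)$. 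For any $q$ growing polynomially in $d$---precisely the regime in which the theorem is interesting, since otherwise the network has only $\tilde O(d)$ parameters and one cannot hope for $m\gg d$---your second–order error dominates the $\Theta(\ln d)$ signal. The culprit is the cross–gradient part of the perturbation: writing $\delta_j=\eta\langle\g^i_j,\x_i\rangle+\eta\langle\tilde\g_j,\x_i\rangle$, the self part contributes only $O(\ln^2 d)$ to $\sum_j\delta_j^2$, but the cross part contributes $\|\eta\tilde G\x_i\|^2\asymp\|\eta\tilde G\|_F^2\asymp m\ln^2 d/d$ (cf.\ Lemma~\ref{lem:g_norm}), and there is no usable cancellation in $\frac{1}{\sqrt q}\sum_j a_jR_j$ because the signs $a_j$ are fixed while the $R_j$ are correlated across $j$ (and, for convex $\sigma$, all nonnegative).

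The paper circumvents this by \emph{not} linearizing around $W$. It writes
\[
h_{W+\eta G}(\x_i)=h_{W+\eta\tilde G}(\x_i)+\bigl[h_{W+\eta G}(\x_i)-h_{W+\eta\tilde G}(\x_i)\bigr],\qquad \tilde G=G-G^i,
\]
and treats the two pieces by different tools. For the first, no Taylor expansion is used at all: $\tilde G$ is independent of $\x_i$, and Lemma~\ref{lem:G_norm} gives $\|W+\eta\tilde G\|\le 3$, so $\x\mapsto h_{W+\eta\tilde G}(\x)$ is $O(1)$–Lipschitz and Gaussian concentration (Theorem~\ref{thm:gauss}) together with a direct mean computation yield $|h_{W+\eta\tilde G}(\x_i)|=O(\sqrt{\ln d})$. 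Only the second piece is Taylor–expanded, and now the perturbation is the self term $\eta G^i$ alone, for which $\sum_j\langle\eta\g^i_j,\x_i\rangle^2=O(\ln^2 d)$ uniformly in $m$; the second–order error is then $O(\ln^2 d/\sqrt q)=O(1)$ by $q\ge\log^4 d$. Shifting the base point of the linearization from $W$ to $W+\eta\tilde G$, and controlling the resulting ``zeroth–order'' term through the operator–norm bound rather than through a Taylor comparison with $h_W$, is the missing idea in your argument.
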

In the sequel we denote
\[
\tilde G = \sum_{i=1}^{m-1}G^i
\]
Likewise, we denote by $\tilde \g_j, \g_j$ and $\g^i_j$ the $j$'th row of $\tilde{G}, G$ and $G^i$.
\begin{fact}(e.g. chapter 5 in \cite{van2014probability})\label{fact:eps_net}
There are subsets $S_{d,\epsilon}\subset \sphere^{d-1}$ of size 
$\left(\frac{1}{\epsilon}\right)^{\Theta(d)}$ such that for every matrix $W\in M_{q\times d}$ we have
\[
\|W\| \le (1+\epsilon)\max_{\bu\in S_{q,\epsilon}, \z\in  S_{d,\epsilon}} \inner{\bu, W,\z}
\] 
\end{fact}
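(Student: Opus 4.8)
This is the standard covering ($\epsilon$-net) bound for the operator norm, and I would prove it in two short steps: first build the nets by a volumetric packing argument, then bound $\|W\|$ against the finite maximum by a one-line perturbation inequality. The only twist is cosmetic: the perturbation step naturally produces a factor $\tfrac{1}{1-2\epsilon_0}$ rather than $1+\epsilon$, so I would run the whole argument with a \emph{rescaled} radius $\epsilon_0 \eqdef \tfrac{\epsilon}{2(1+\epsilon)} = \Theta(\epsilon)$, chosen precisely so that $\tfrac{1}{1-2\epsilon_0} = 1+\epsilon$. We may assume $0<\epsilon<1$.

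\emph{Step 1 (the nets).} Let $S_{d,\epsilon}\subseteq\sphere^{d-1}$ be a maximal subset whose points are pairwise at Euclidean distance strictly greater than $\epsilon_0$; such a set exists by compactness of the sphere. Maximality forces every $\z\in\sphere^{d-1}$ to lie within distance $\epsilon_0$ of some point of $S_{d,\epsilon}$ (otherwise we could add $\z$ to the set), so $S_{d,\epsilon}$ is an $\epsilon_0$-net of $\sphere^{d-1}$. The open balls of radius $\epsilon_0/2$ centered at the points of $S_{d,\epsilon}$ are pairwise disjoint and all contained in the ball of radius $1+\epsilon_0/2$ about the origin; comparing Lebesgue volumes gives $|S_{d,\epsilon}|\,(\epsilon_0/2)^d \le (1+\epsilon_0/2)^d$, i.e. $|S_{d,\epsilon}| \le (1+2/\epsilon_0)^d = \left(\tfrac1\epsilon\right)^{\Theta(d)}$ since $\epsilon_0 = \Theta(\epsilon)$. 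Construct $S_{q,\epsilon}\subseteq\sphere^{q-1}$ in exactly the same way; the identical estimate gives $|S_{q,\epsilon}| \le \left(\tfrac1\epsilon\right)^{\Theta(q)}$.

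\emph{Step 2 (perturbation).} Fix $W\in M_{q\times d}$; if $W=0$ there is nothing to prove. By compactness choose unit vectors $\bu\in\sphere^{q-1}$ and $\x\in\sphere^{d-1}$ with $\inner{\bu,W,\x}=\|W\x\|=\|W\|$ (take $\x$ attaining the operator norm and $\bu = W\x/\|W\x\|$), and pick $\bu_0\in S_{q,\epsilon}$, $\z_0\in S_{d,\epsilon}$ with $\|\bu-\bu_0\|\le\epsilon_0$ and $\|\x-\z_0\|\le\epsilon_0$. The identity
\[
\inner{\bu_0,W,\z_0} \;=\; \inner{\bu,W,\x} \;-\; \inner{\bu-\bu_0,\,W\x} \;-\; \inner{\bu_0,\,W(\x-\z_0)}
\]
together with Cauchy--Schwarz gives $|\inner{\bu-\bu_0,W\x}| \le \|\bu-\bu_0\|\,\|W\x\| \le \epsilon_0\|W\|$ and $|\inner{\bu_0,W(\x-\z_0)}| \le \|\bu_0\|\,\|W\|\,\|\x-\z_0\| \le \epsilon_0\|W\|$, hence $\inner{\bu_0,W,\z_0}\ge(1-2\epsilon_0)\|W\|$. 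Therefore $\|W\| \le \frac{1}{1-2\epsilon_0}\max_{\bu\in S_{q,\epsilon},\,\z\in S_{d,\epsilon}}\inner{\bu,W,\z} = (1+\epsilon)\max_{\bu\in S_{q,\epsilon},\,\z\in S_{d,\epsilon}}\inner{\bu,W,\z}$ by the choice of $\epsilon_0$, which is the asserted inequality.

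\emph{Where the difficulty is.} There is no serious obstacle here — this is a textbook argument, which is why the paper cites it as a Fact. The only points that need a little care are (i) producing the exact constant $1+\epsilon$ demanded by the statement, handled by the rescaling $\epsilon_0 = \Theta(\epsilon)$, and (ii) checking that this rescaling does not spoil the cardinality bound, which it does not since $1+2/\epsilon_0 = \Theta(1/\epsilon)$. An equivalent route that avoids the two-sided expansion is to net only the domain, obtaining $\|W\| \le \frac{1}{1-\epsilon_0}\max_{\z\in S_{d,\epsilon}}\|W\z\|$, and then net the codomain inside each $\|W\z\| = \max_{\bu}\inner{\bu,W\z}$; this produces the factor $\frac{1}{(1-\epsilon_0)^2}$, again at most $1+\epsilon$ after choosing $\epsilon_0$ a suitable $\Theta(\epsilon)$.
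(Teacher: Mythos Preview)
Your proof is correct and is precisely the standard $\epsilon$-net argument one finds in the cited reference; the paper does not give its own proof of this Fact but simply invokes it, so there is nothing to compare against. One cosmetic remark: the paper actually applies the Fact with $\epsilon=1$, so your clause ``we may assume $0<\epsilon<1$'' is unnecessarily restrictive --- your choice $\epsilon_0=\epsilon/(2(1+\epsilon))$ already satisfies $\epsilon_0<1/2$ for every $\epsilon>0$, and the size bound $(1+2/\epsilon_0)^d$ is $C^d$ for $\epsilon=\Theta(1)$, which is all the paper uses in the subsequent union bound.
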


\begin{lemma}\label{lem:G_norm}
We have that $\|\eta \tilde G\| \le 2$ w.p. $\exp\left(O(d)-\Omega\left(\frac{d^2q}{m\ln^2(d)}\right)\right)$
\end{lemma}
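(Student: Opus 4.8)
The plan is to bound the operator norm $\|\eta\tilde G\|$ via the $\epsilon$-net argument of Fact \ref{fact:eps_net}: it suffices to show that for fixed unit vectors $\bu\in S_{q,\epsilon}$ and $\z\in S_{d,\epsilon}$ (say with $\epsilon = 1/2$ or a similar constant), the bilinear form $\inner{\bu, \eta\tilde G\, \z}$ is at most, say, $1$ except with probability $\exp\left(-\Omega\left(\frac{d^2q}{m\ln^2(d)}\right)\right)$, and then union bound over the net, which has size $\exp(O(d))$ (here using $q \ge d$ is not needed; the $q$-net contributes $\exp(O(q))$ but this is dominated since we will want the failure probability small — I'd have to double check which of $q,d$ dominates, but morally the deviation bound must beat $\exp(O(d+q))$, and since the statement only asks for $\exp(O(d) - \Omega(\cdot))$ I will assume $q = O(d)$ or absorb it). So the crux is a tail bound on a fixed bilinear form.

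Next I would write out the bilinear form explicitly. Since $\tilde G = \sum_{i=1}^{m-1} G^i$ with $G^i = \frac{1}{m\sqrt q} y_i \diag(\ba)\sigma'(W\x_i)\x_i^T$, we get
\[
\inner{\bu,\eta\tilde G\,\z} = \frac{\eta}{m\sqrt q}\sum_{i=1}^{m-1} y_i\,\Big(\sum_{j=1}^q u_j a_j \sigma'(\inner{\bw_j,\x_i})\Big)\,\inner{\z,\x_i}.
\]
Condition on the $\x_i$'s (equivalently, treat the $y_i$ as the randomness): each summand is a bounded (by the conditioning) multiple of the independent Rademacher $y_i$, so by Hoeffding (Theorem \ref{thm:hoef}) the sum is sub-gaussian with parameter controlled by $\sum_i \big(\sum_j u_j a_j\sigma'(\inner{\bw_j,\x_i})\big)^2 \inner{\z,\x_i}^2$. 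I then need a high-probability upper bound on this quantity over the $\x_i$'s. Here I'd use that $\sigma'$ is bounded ($\sigma$ is $O(1)$-Lipschitz), so $\big|\sum_j u_j a_j \sigma'(\inner{\bw_j,\x_i})\big| \le O(\sqrt q\,\|\bu\|) = O(\sqrt q)$ pointwise, reducing the bound to $O(q)\sum_i \inner{\z,\x_i}^2$; and $\sum_i\inner{\z,\x_i}^2$ is a sum of $m$ i.i.d. $\chi^2(1)$ variables, which concentrates around $m$ with sub-exponential tails (Bernstein, Theorem \ref{thm:bern}), so it is $O(m)$ except with probability $\exp(-\Omega(m))$. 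Plugging in, the sub-gaussian parameter of the bilinear form is $\frac{\eta}{m\sqrt q}\cdot O(\sqrt{q\cdot m}) = O\big(\frac{\eta}{\sqrt{m}}\big) = O\big(\frac{\sqrt m\ln d}{d}\big)$ using $\eta = \frac{m\ln d}{d}$. Then $\Pr(|\inner{\bu,\eta\tilde G\z}| \ge 1) \le 2\exp\big(-\Omega(d^2/(m\ln^2 d))\big)$, and the extra factor of $q$ I glossed over (from replacing $\|\sum_j u_j a_j\sigma'\|$ by its worst case rather than exploiting cancellation) should actually sharpen this to the claimed $\frac{d^2 q}{m\ln^2 d}$ — more carefully, $\sum_j u_j a_j\sigma'(\inner{\bw_j,\x_i})$ is itself a sum of $q$ terms and, for a typical $\x_i$, concentrates at scale $O(1)$ rather than $O(\sqrt q)$ only if there is sign cancellation; but since we want a \emph{worst-case-over-$\x_i$} bound inside the Hoeffding step we keep $O(\sqrt q)$, and the gain to $q$ in the exponent comes instead from a more careful second-moment computation of $\sum_i\big(\sum_j\cdots\big)^2\inner{\z,\x_i}^2$, whose expectation is $\Theta(m q)$ (not $\Theta(mq^2)$) because $\E\big(\sum_j u_j a_j\sigma'(\inner{\bw_j,\x_i})\big)^2 = \sum_j u_j^2\, \E\sigma'(\inner{\bw_j,\x_i})^2 = O(1)$ by orthogonality of the rows of $W$ and the centering hypothesis $\E\sigma' = 0$ is not even needed here, just boundedness. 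So the concentration of the quadratic form at level $O(mq)$ with sub-exponential tails is the quantitative heart.

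The main obstacle I expect is exactly this last concentration statement: showing $\sum_{i=1}^{m-1}\big(\sum_{j=1}^q u_j a_j\sigma'(\inner{\bw_j,\x_i})\big)^2\inner{\z,\x_i}^2 = O(mq)$ with failure probability $\exp(-\Omega(d q/(m\ln^2 d)))$ or better. This is a sum of $m$ i.i.d. nonnegative terms, each of the form $A_i^2 B_i^2$ where $A_i = \sum_j u_j a_j \sigma'(\inner{\bw_j,\x_i})$ and $B_i = \inner{\z,\x_i}$ are each sub-gaussian in $\x_i$ (the first by Gaussian concentration, Theorem \ref{thm:gauss}, since $\x\mapsto\sum_j u_j a_j\sigma'(\inner{\bw_j,\x})$ is $O(1)$-Lipschitz when $\sigma'$ is — wait, $\sigma'$ need not be Lipschitz, only bounded; so I'd instead use that $A_i$ is bounded, $|A_i|\le O(\sqrt q)$, making $A_i^2 B_i^2 \le O(q) B_i^2$ a bounded multiple of a $\chi^2(1)$), hence each term is sub-exponential-squared, i.e. has $\|\cdot\|_{\Psi_{1/2}}$-type tails, and Bernstein applied to the $m$ i.i.d. terms gives concentration at scale $\max(\sqrt{m}\cdot O(q), O(q)\ln(\text{stuff}))$ around the mean $O(mq)$. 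The delicate point is making sure the tail from $m$ summands is strong enough; since we only need to beat $\exp(O(d))$ from the net and $m \le dq/\log^4 d$ forces $d^2 q/(m\ln^2 d) \ge d\log^2 d \gg d$, there is comfortable room, but one must track the $\ln d$ factors coming from the $\eta = m\ln d/d$ scaling and from converting the $\Psi_{1/2}$ tail into an exponential bound. I would organize the write-up as: (i) reduce to a fixed $(\bu,\z)$ via the net; (ii) condition on $\x_i$'s and apply Hoeffding in the $y_i$'s; (iii) bound the conditional variance proxy by $O(q)\sum_i\inner{\z,\x_i}^2$ using boundedness of $\sigma'$; (iv) concentrate $\sum_i\inner{\z,\x_i}^2 = O(m)$ via Bernstein; (v) combine and union bound.
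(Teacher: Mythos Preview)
Your organized plan (i)--(v) has a genuine gap. Step (iii) bounds $\big|A_i\big| := \big|\sum_j u_j a_j \sigma'(\inner{\bw_j,\x_i})\big|$ by the crude pointwise estimate $O(\sqrt{q})$, which yields a conditional variance proxy of order $O(qm)$ and hence, after Hoeffding in the $y_i$'s, a tail of only $\exp\big(-\Omega(d^2/(m\ln^2 d))\big)$---missing the factor $q$ in the exponent. This is too weak to survive the union bound: since $W\in M_{q,d}$ has orthonormal rows we have $q\le d$, so the net has size $\exp(O(d))$; but under $m\le dq/\ln^4 d$ your exponent is only $d^2/(m\ln^2 d)\ge d\ln^2 d/q$, which is $\ll d$ once $q>\ln^2 d$ (and the hypothesis forces $q\ge \ln^4 d$). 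Your attempted sharpening---concentrating $\sum_i A_i^2 B_i^2$ around its mean $O(m)$---runs into exactly the obstacle you flag: the summands $A_i^2B_i^2$ have only $\Psi_{1/2}$ tails, Bernstein does not apply, and a sub-Weibull bound gives failure probability $\exp(-\Omega(\sqrt{m}))$, which again cannot beat $\exp(O(d))$ since $m\le dq/\ln^4 d\le d^2$. The two-stage ``condition on $\x_i$, then control the variance proxy'' route therefore does not close.

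The paper avoids the conditioning altogether. The key observation you nearly reach is that $A_i$ is not merely bounded by $O(\sqrt q)$: it is $O(1)$-\emph{sub-Gaussian} in $\x_i$. Because the rows of $W$ are orthonormal, $\inner{\bw_1,\x_i},\ldots,\inner{\bw_q,\x_i}$ are independent standard Gaussians, so $A_i=\sum_j u_j a_j\sigma'(\inner{\bw_j,\x_i})$ is a sum of $q$ independent, \emph{centered} (here $\E\sigma'=0$ is essential---without it the cross terms make $\E A_i^2$ as large as $\Theta(q)$, contrary to your aside), bounded variables with $\sum_j(u_ja_j)^2=1$; Hoeffding over $j$ gives $\|A_i\|_{\Psi_2}=O(1)$. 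Since $B_i=\inner{\z,\x_i}$ is also $O(1)$-sub-Gaussian, each $y_iA_iB_i$ is a centered $O(1)$-sub-exponential, and a single Bernstein on the $m-1$ i.i.d.\ summands yields $\exp\big(-\Omega(d^2q/(m\ln^2 d))\big)$ directly. The union bound over the $\exp(O(d))$-sized net then finishes.
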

\begin{proof}
Let $S_{q,1},S_{d,1}$ be the sets from fact \ref{fact:eps_net} 
We have
\begin{eqnarray*}
\|\eta \tilde G\| &\le & 2\max_{\bu\in S_{q,1},\z\in S_{d,1}}\inner{\bu,\eta \tilde G\z} 
\\
&=&  2\max_{\bu\in S_{q,1},\z\in S_{d,1}}\frac{\ln(d)}{d\sqrt{q}}\sum_{i=1}^{m-1} y_i\inner{\diag(\ba)\sigma'(W\x_i),\bu}\inner{\x_i,\z}
\\
&=&  2\max_{\bu\in S_{q,1},\z\in S_{d,1}}\frac{\ln(d)}{d\sqrt{q}}\sum_{i=1}^{m-1} y_i\inner{\sigma'(W\x_i),\bu}\inner{\x_i,\z}
\end{eqnarray*}
Fix $\bu\in S_{q,1}$ and $\z\in S_{d,1}$. We claim that
\begin{claim}
$\sum_{i=1}^m y_i\inner{\sigma'(W\x_i),\bu}\inner{\x_i,\z}$ is a sum of ${m-1}$ independent and centered $O(1)$-Sub-exponential random variables
\end{claim}
\begin{proof}
Clearly, $\sum_{i=1}^{m-1} y_i\inner{\sigma'(W\x_i),\bu}\inner{\x_i,\z}$ is a sum of ${m-1}$ independent and centered random variables. It remains to prove $O(1)$-Sub-exponentiality. By lemma \ref{lem:sub_g_and_e} it is enough to show that $\inner{\sigma'(W\x_i),\bu}$ and $\inner{\x_i,\z}$ are $O(1)$-Sub-gaussian. 
Indeed, $\inner{\x_i,\z}\sim\cn(0,1)$ and hence by lemma \ref{lem:sub_g_and_e} it is $O(1)$-Sub-gaussian. As for $\inner{\sigma'(W\x_i),\bu} = \sum_{j=1}^q u_j \sigma'(\inner{\bw_j,\x_i})$, we have that $\inner{\bw_1,\x_i},\ldots,\inner{\bw_q,\x_i}$ are independent since $\x_i$ is Gaussian and $W$ is orthogonal. Hence, $\inner{\sigma'(W\x_i),\bu}$ is a sum of independent random variables. Furthermore, for every $j$, $\inner{\bw_j,\x_i}\sim\cn(0,1)$, and since we assume that $\E_{X\sim\cn(0,1)}\sigma'(X)=0$, we conclude that $\inner{\sigma'(W\x_i),\bu}$ is a sum of independent and centered random variables.
We can now use lemma \ref{lem:sub_g_and_e} and theorem \ref{thm:hoef} to conclude that
\begin{eqnarray*}
\left\| \inner{\sigma'(W\x_i),\bu}\right\|_{\Psi_2}^2 &=& \left\| \sum_{j=1}^q u_j \sigma'(\inner{\bw_j,\x_i}) \right\|_{\Psi_2}^2 
\\
&\le& C \sum_{j=1}^q u^2_j \left\|\sigma'(\inner{\bw_j,\x_i}) \right\|_{\Psi_2}^2
\\
&\le& C \| \sigma'\|_\infty \sum_{j=1}^q u^2_j 
\\
&=& C \| \sigma'\|_\infty  
\end{eqnarray*}
\end{proof}
We can now use Bernstein inequality to conclude that 
\[
\Pr\left( \left|\frac{\ln(d)}{d\sqrt{q}} \sum_{i=1}^{m-1} y_i\inner{\sigma'(W\x_i),\bu}\inner{\x_i,\z} \right| \ge t \right) \le \exp\left(-\Omega\left( \min \left( \frac{t^2d^2q}{m\ln^2(d)},\frac{td\sqrt{q}}{\ln(d)} \right)\right)\right)
\]
For $t=\frac{1}{2}$ and $m\ge \frac{d\sqrt{q}}{\ln(d)}$ we get
\[
\Pr\left( \left|\frac{\ln(d)}{d\sqrt{q}} \sum_{i=1}^{m-1} y_i\inner{\sigma'(W\x_i),\bu}\inner{\x_i,\z} \right| \ge \frac{1}{2} \right) \le \exp\left(-\Omega\left(  \frac{d^2q}{m\ln^2(d)} \right)\right)
\]
Via a union bound on $S_{q,1}\times S_{d,1}$ we get that
\[
2\max_{\bu\in S_{q,1},\z\in S_{d,1}}\left|\frac{\ln(d)}{d\sqrt{q}} \sum_{i=1}^{m-1} y_i\inner{\sigma'(W\x_i),\bu}\inner{\x_i,\z} \right| \le 1
\]
w.p. $\exp\left(O(d+q)-\Omega\left(\frac{d^2q}{m\ln^2(d)}\right)\right) = \exp\left(O(d)-\Omega\left(\frac{d^2q}{m\ln^2(d)}\right)\right)$. Finally, the case $m<d\sqrt{q}$ can be reduced to the case $m\ge d\sqrt{q}$ by adding $(d\sqrt{q}-m)$  random variables which are identically $0$, and noting that we are still left with a sum of independent and centered $O(1)$-subexponential random variables.

  
\end{proof}

\begin{lemma}\label{lem:g_norm}
Assume that $m\le dq$.
For every $i$ we have that 
\begin{enumerate}
\item $1\le \E\|\bw_i +\eta \tilde \g_i\|^2 \le  1 + \frac{m\ln^2(d)}{dq}\|\sigma'\|_\infty^2$. Furthermore, the probability that $\|\bw_i +\eta \tilde \g_i\|^2$ $\frac{1}{\sqrt{d}}$-deviates from its expectation is at most $\exp\left(-\Omega\left(\frac{d^2q^2}{m^2\ln^4(d)}\right)\right)$
\item $ \E\|\eta \tilde \g_i\|^2 \le   \frac{m\ln^2(d)}{dq}\|\sigma'\|_\infty^2$. Furthermore, the probability that $\|\eta \tilde \g_i\|^2$ $\frac{1}{\sqrt{d}}$-deviates from its expectation is at most $\exp\left(-\Omega\left(\frac{d^2q^2}{m^2\ln^4(d)}\right)\right)$
\end{enumerate}
\end{lemma}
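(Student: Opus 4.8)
\emph{Proof plan.} The plan is to reduce both parts to a single concentration estimate for $\|S\|^2$, where $S\eqdef\sum_{k=1}^{m-1}y_k\,\sigma'(\inner{\bw_i,\x_k})\,\x_k\in\reals^d$: since $\eta\tilde\g_i=\tfrac{\ln(d)}{d\sqrt q}\,a_iS$ and $a_i^2=1$, we have $\|\eta\tilde\g_i\|^2=\tfrac{\ln^2(d)}{d^2q}\|S\|^2$ and, using $\|\bw_i\|=1$ (orthonormality of $W$), $\|\bw_i+\eta\tilde\g_i\|^2=1+\tfrac{2\ln(d)}{d\sqrt q}a_i\inner{\bw_i,S}+\tfrac{\ln^2(d)}{d^2q}\|S\|^2$. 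For the expectations, set $Z_k\eqdef y_k\sigma'(\inner{\bw_i,\x_k})\x_k$; these are i.i.d.\ and centered because $\E y_k=0$, so cross terms drop and $\E\|S\|^2=\sum_k\E\|Z_k\|^2=(m-1)\,\E_{\x\sim\cn(0,I_d)}[\sigma'(\inner{\bw_i,\x})^2\|\x\|^2]\le (m-1)\,d\,\|\sigma'\|_\infty^2$, while $\E\inner{\bw_i,S}=0$; dividing by $d^2q/\ln^2(d)$ and adding back the zero-mean cross term yields both expectation bounds. (Note that, unlike in Lemma~\ref{lem:G_norm}, this lemma uses only $\E y_k=0$ and $\|\sigma'\|_\infty<\infty$ — neither $\E\sigma'(X)=0$ nor any smoothness of $\sigma'$.)

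For the concentration the key step is to split off the component of each $\x_k$ along $\bw_i$. Write $\x_k=g_k\bw_i+\x_k^\perp$ with $g_k\eqdef\inner{\bw_i,\x_k}\sim\cn(0,1)$ and $\x_k^\perp$ the projection of $\x_k$ onto $\bw_i^\perp$; all the $g_k,\x_k^\perp$ are independent and $\x_k^\perp\sim\cn(0,P^\perp)$, $P^\perp$ the projection onto the $(d-1)$-dimensional space $\bw_i^\perp$. Then $S=v\bw_i+S^\perp$ with $v\eqdef\inner{\bw_i,S}=\sum_k y_k\sigma'(g_k)g_k$ and $S^\perp\eqdef\sum_k y_k\sigma'(g_k)\x_k^\perp$, so $\|S\|^2=v^2+\|S^\perp\|^2$. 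Conditioning on $\mathcal F\eqdef\sigma(g_1,\dots,g_{m-1},y_1,\dots,y_{m-1})$: the coefficients $c_k\eqdef y_k\sigma'(g_k)$ are $\mathcal F$-measurable with $|c_k|\le\|\sigma'\|_\infty$, $v$ is $\mathcal F$-measurable, and $S^\perp\mid\mathcal F$ is a centered Gaussian on $\bw_i^\perp$ of covariance $\tau^2P^\perp$, $\tau^2\eqdef\sum_k c_k^2=\sum_k\sigma'(g_k)^2$. By rotational invariance the conditional law of $Q\eqdef\|S^\perp\|^2/\tau^2$ given $\mathcal F$ is $\chi^2_{d-1}$ regardless of the $c_k$, so $Q\sim\chi^2_{d-1}$ is independent of $\mathcal F$ and $\|S\|^2=v^2+\tau^2Q$.

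Next I would write $\|S\|^2-\E\|S\|^2=(v^2-\E v^2)+\tau^2\bigl(Q-(d-1)\bigr)+(d-1)(\tau^2-\E\tau^2)$ and bound the three pieces. (i) $v=\sum_k y_k\sigma'(g_k)g_k$ is a sum of $m-1$ independent centered variables, each a bounded ($|y_k\sigma'(g_k)|\le\|\sigma'\|_\infty$) multiple of a Gaussian, hence $O(\|\sigma'\|_\infty)$-subgaussian by Lemma~\ref{lem:sub_g_and_e}, so by Theorem~\ref{thm:hoef} $v$ is $O(\sqrt m\,\|\sigma'\|_\infty)$-subgaussian. (ii) Conditionally on $\mathcal F$, $\tau^2\le(m-1)\|\sigma'\|_\infty^2$ is a constant and $\Pr(|Q-(d-1)|\ge u\mid\mathcal F)\le 2\exp(-\Omega(\min(u^2/d,u)))$ (Theorem~\ref{thm:bern} applied to $\sum_{j=1}^{d-1}(Z_j^2-1)$, or Theorem~\ref{thm:gauss} applied to $\x\mapsto\|\x\|$ on $\reals^{d-1}$); taking $u=s/\tau^2\ge s/((m-1)\|\sigma'\|_\infty^2)$ and integrating out $\mathcal F$ controls $\tau^2(Q-(d-1))$. (iii) $\tau^2-\E\tau^2=\sum_k(\sigma'(g_k)^2-\E\sigma'(g)^2)$ is a sum of $m-1$ independent centered variables bounded by $\|\sigma'\|_\infty^2$, hence $O(\sqrt m\,\|\sigma'\|_\infty^2)$-subgaussian by Theorem~\ref{thm:hoef}. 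Splitting the target budget $\tfrac{d^{3/2}q}{\ln^2(d)}$ for $\|S\|^2$ (note $\tfrac{\ln^2(d)}{d^2q}\cdot\tfrac{d^{3/2}q}{\ln^2(d)}=\tfrac1{\sqrt d}$) among the three pieces, and using $m\le dq$ together with $\ln^2(d)=o(\sqrt d)$ and $q\ge\log^4(d)$ to check that this budget dwarfs $\E v^2=O(m\|\sigma'\|_\infty^2)$ and the $v$- and $\tau^2$-fluctuation scales, one gets that the probability of exceeding it is $\exp(-\Omega(d^2q^2/(m^2\ln^4 d)))$, the chi-square piece (ii) being the one that sets this exponent. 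This gives Part~2; Part~1 follows by additionally absorbing the cross term $\tfrac{2\ln(d)}{d\sqrt q}a_iv$ into the budget via the subgaussian bound in (i).

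I expect the crux to be piece (ii): $\|S^\perp\|^2$ is the squared norm of a $(d-1)$-dimensional Gaussian bulk carrying the mildly random scale $\tau^2$, and the conditioning on $\mathcal F$ is exactly what decouples this bulk from the coefficients and from $v$, reducing it to a clean $\chi^2$ deviation. Everything else — the expectation computations, the subgaussianity of $v$ and of $\tau^2$, the union accounting over the pieces — is routine. The delicate point is the bookkeeping that makes the three tail exponents come out to the stated bound uniformly over $m\le dq$: this is where $\ln^2(d)=o(\sqrt d)$ and $q\ge\log^4(d)$ are used, ensuring the deviation budget comfortably exceeds the lower-order fluctuation scales so that the $\chi^2$ upper tail governs (and in the range $m\lesssim \sqrt d\,q/\ln^2(d)$ one in fact lands in the essentially-linear part of that tail, which is even stronger). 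Finally, note that non-smoothness of $\sigma'$ never enters — only $\|\sigma'\|_\infty<\infty$ is used — so the absolute value, with $\sigma'=\operatorname{sign}$, is covered.
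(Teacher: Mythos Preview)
Your proposal is correct and follows essentially the same route as the paper: the paper assumes w.l.o.g.\ $\bw_i=\e_1$, after which your $v$, $(d-1)\tau^2$, and $\tau^2\bigl(Q-(d-1)\bigr)$ become precisely (up to the scaling $\ln^2(d)/(d^2q)$) its first, third, and middle displayed lines, each handled by the same Hoeffding/Bernstein arguments you describe. One small remark: the lemma assumes only $m\le dq$, not $q\ge\log^4(d)$, and neither your argument nor the paper's actually needs the latter --- the $\chi^2$/Bernstein-quadratic piece already yields the stated exponent without it.
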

\begin{proof}
We will prove the first part of the lemma. The proof of second part is very similar.
Denote $\g = \tilde\g_i$ and $\bw = \bw_i$. Since the input distribution is invariant to orthogonal transformations, we can assume w.l.o.g. we assume that $\bw = \e_1$. We also assume that $a_i = 1$. The case $a_i = -1$ is similar. 
We have
\begin{eqnarray*}
\|\bw +\eta  \g\|^2 &=& \left(\bw(1) + \frac{\ln(d)}{d\sqrt{q}}\sum_{i=1}^{m-1} y_i\sigma'(\x_i(1))\x_i(1)\right)^2 + \sum_{j=2}^d\left(\bw(j) +\frac{\ln(d)}{d\sqrt{q}}\sum_{i=1}^{m-1} y_i\sigma'(\x_i(1))\x_i(j)\right)^2
\\
&=& \left(1 + \frac{\ln(d)}{d\sqrt{q}}\sum_{i=1}^{m-1} y_i\sigma'(\x_i(1))\x_i(1)\right)^2
\\
&& + \sum_{j=2}^d\left[\left(\frac{\ln(d)}{d\sqrt{q}}\sum_{i=1}^{m-1} y_i\sigma'(\x_i(1))\x_i(j)\right)^2  - \frac{\ln^2(d)}{d^2q}\sum_{i=1}^{m-1} \left(\sigma'(\x_i(1))\right)^2\right]
\\
&&  + \frac{\ln^2(d)(d-1)}{d^2q}\sum_{i=1}^{m-1} \left(\sigma'(\x_i(1))\right)^2
\end{eqnarray*} 
Now, by theorem \ref{thm:hoef} we have that $ \frac{\ln(d)}{d\sqrt{q}}\sum_{i=1}^{m-1} y_i\sigma'(\x_i(1))\x_i(1)$ is $O\left(\frac{\sqrt{m}\ln(d)}{d\sqrt{q}}\right)$-sub-gaussian. This implies that the probability that $\left(1 + \frac{\ln(d)}{d\sqrt{q}}\sum_{i=1}^{m-1} y_i\sigma'(\x_i(1))\x_i(1)\right)^2$  $\epsilon$-deviates from its expectation is bounded by $\exp\left(-\Omega\left(\frac{d^2q\epsilon^2}{m\ln^2(d)}\right)\right) + \exp\left(-\Omega\left(\frac{d^2q\epsilon}{m\ln^2(d)}\right)\right) = \exp\left(-\Omega\left(\frac{d^2q\epsilon^2}{m\ln^2(d)}\right)\right)$. Likewise,
\begin{eqnarray*}
\E\left(1 + \frac{\ln(d)}{d\sqrt{q}}\sum_{i=1}^{m-1} y_i\sigma'(\x_i(1))\x_i(1)\right)^2 &=& 1 + \frac{\ln^2(d)}{d^2q}\sum_{i=1}^{m-1}\E\left(\sigma'(\x_i(1))\x_i(1)\right)^2
\\
&\le& 1 + \frac{\ln^2(d)\|\sigma'\|_\infty^2 }{d^2q}\sum_{i=1}^{m-1}\E\left(\x_i(1)\right)^2
\\
&=& 1 + \frac{\ln^2(d){(m-1)}\|\sigma'\|_\infty^2 }{d^2q}
\end{eqnarray*} 

Theorem \ref{thm:hoef} also implies that the last line is $O\left(\frac{\ln^2(d)\sqrt{m}}{dq}\right)$-sub-gaussian. 
Thus, the probability that it $\epsilon$-deviates from its expectation is bounded by $\exp\left(-\Omega\left(\frac{d^2q^2\epsilon^2}{m\ln^4(d)}\right)\right)$. Likewise, its expectation is bounded by $ \frac{\ln^2(d)(d-1)m \|\sigma'\|_\infty^2  }{d^2q}$ from above and by $0$ from below.

Finally, given $x_1(1),\ldots,x_{m-1}(1)$ and $y_1,\ldots,y_{m-1}$, the middle line is a sum of $d-1$ independent random variables. Each of which has zero mean and is $O\left(\frac{m\ln^2(d)}{d^2q}\right)$-sub-exponential. By Berstein inequality, the probability that it $\epsilon$-deviates from its expectation is bounded by $\exp\left(-\Omega\left(\frac{d^3q^2\epsilon^2}{m^2\ln^4(d)}\right)\right) + \exp\left(-\Omega\left(\frac{d^2q\epsilon}{m\ln^2(d)}\right)\right)$.
Choosing $\epsilon = \frac{1}{3\sqrt{d}}$, we conclude that the probability that $\|\bw +\eta \bu\|^2$ $\frac{1}{\sqrt{d}}$-deviates from its expectation is at most $\exp\left(-\Omega\left(\frac{d^2q^2}{m^2\ln^4(d)}\right)\right)$. As for the expectation, since the expectation of the middle line is $0$, the total expectation is bounded by
\[
1 + \frac{\ln^2(d)m\|\sigma'\|_\infty^2 }{d^2q} + 0 +  \frac{\ln^2(d)(d-1)m \|\sigma'\|_\infty^2  }{d^2q} = 1 + \frac{m\ln^2(d)}{dq}\|\sigma'\|_\infty^2
\]
from above and by $1 + 0 + 0 =1$ from below.
\end{proof}

We are now ready to prove lemma \ref{lem:main}, and therefore also theorem \ref{thm:main}.

\begin{proof}(of lemma \ref{lem:main})
We will prove the theorem under the assumption that $\sigma$ is twice differentiable everywhere. We will later expalin how to amend the proof in the case that it is only piece-wise twice differentiable.
It is enough to show that w.p. $1-o\left(\frac{1}{m}\right)$, $y_mh_{W+ \eta G}(\x_{m}) = \Omega\left( \ln(d) \right)$.
Throughout the proof, w.h.p., means "w.p. $1-o\left(\frac{1}{m}\right)$".
Note that if $O(1)$ events holds w.h.p., then so is their union.
We have 
\[h_{W+ \eta  G }(\x_{m}) = h_{W+ \eta \tilde G + \eta G_m}(\x_{m}) - h_{W+ \eta \tilde G }(\x_{m}) + h_{W+ \eta \tilde G }(\x_{m})
\]
The proof of the lemma follows from the following two claims.
\begin{claim}
W.h.p.  $h_{W+ \eta \tilde G }(\x_{m}) = O\left(\sqrt{\log\left(d\right)}\right)$. 
\end{claim}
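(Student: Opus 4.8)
The plan is to condition on $\tilde G$ — equivalently, on $W,\x_1,\dots,\x_{m-1},y_1,\dots,y_{m-1}$ — and exploit the freshness of $\x_m\sim\cn(0,I_d)$. Conditionally, $\x\mapsto h_{W+\eta\tilde G}(\x)$ has gradient $\frac1{\sqrt q}(W+\eta\tilde G)^{\top}\bv$ with $\|\bv\|\le\|\sigma'\|_\infty\sqrt q$, hence is $\|\sigma'\|_\infty\,\|W+\eta\tilde G\|$-Lipschitz. Since $\|W\|=1$ and, by Lemma~\ref{lem:G_norm} (with $m\le dq/\log^4 d$ and $q\le d$ — the latter forced by $W$ having orthonormal rows — the exponent there is $-\Omega(d\log^2 d)$), we have $\|\eta\tilde G\|\le 2$ w.h.p., this function is $O(1)$-Lipschitz w.h.p. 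Theorem~\ref{thm:gauss} then gives that $h_{W+\eta\tilde G}(\x_m)$ is $O(1)$-subgaussian around its conditional mean $\mu\eqdef\E_{\x_m}h_{W+\eta\tilde G}(\x_m)$, so $h_{W+\eta\tilde G}(\x_m)=\mu+O(\sqrt{\log d})$ w.h.p. It therefore suffices to prove $|\mu|=O(\sqrt{\log d})$ w.h.p.; we will in fact get $|\mu|=O(1)$.

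Write $\mu=\frac1{\sqrt q}\sum_{j=1}^q a_j\,\phi\!\big(\|\bw_j+\eta\tilde\g_j\|\big)$ with $\phi(t)\eqdef\E_{Z\sim\cn(0,1)}\sigma(tZ)$; being a Gaussian smoothing of a Lipschitz function, $\phi$ is $C^\infty$ on $[\tfrac12,2]$ with all derivatives $O(1)$. By Lemma~\ref{lem:g_norm}, writing $\|\bw_j+\eta\tilde\g_j\|^2=1+\delta_j$ we have $\max_j|\delta_j|=O(1/\log^2 d)$ w.h.p., and $\delta_j=\beta_j+\epsilon_j$ where $\beta_j\eqdef\|\eta\tilde\g_j\|^2$ is independent of $a_j$ and $\epsilon_j\eqdef 2\eta\inner{\bw_j,\tilde\g_j}=2\eta a_j\tilde T_j$ with $\tilde T_j\eqdef\frac1{m\sqrt q}\sum_{i<m}y_i\sigma'(\inner{\bw_j,\x_i})\inner{\bw_j,\x_i}$ independent of $a_j$ and $\max_j|\epsilon_j|=O(1/\sqrt d)$ w.h.p.\ (Hoeffding over $y$, union over $j$). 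Taylor-expanding $\phi(\sqrt{1+\delta_j})=\phi(1)+\tfrac{\phi'(1)}2\delta_j+R(\delta_j)$, with $R(0)=R'(0)=0$, $|R(x)|=O(x^2)$, $|R'(x)|=O(|x|)$, and substituting the decomposition of $\delta_j$, the sum $\mu$ breaks into pieces of three kinds. (a)~A multiple of $\frac1{\sqrt q}\sum_j a_j$, which is $O(1)$ by the hypothesis $\sum_j a_j=O(\sqrt q)$. (b)~Pieces $\frac1{\sqrt q}\sum_j a_j c_j$ in which $c_j$ (e.g.\ $\beta_j$, $R(\beta_j)$, $R'(\beta_j)$) is independent of $a_j$: here one writes $c_j=\bar c+(c_j-\bar c)$ where $\bar c\eqdef\E c_j$ is the same for all $j$ (the distribution of $\|\tilde\g_j\|^2$ does not depend on $j$), bounds $\frac1{\sqrt q}\bar c\sum_j a_j=O(|\bar c|)$ and $\frac1{\sqrt q}\sum_j a_j(c_j-\bar c)$ by $\sqrt q\max_j|c_j-\bar c|$, and uses the concentration of $\beta_j$ from Lemma~\ref{lem:g_norm} ($\max_j|\beta_j-\bar\beta|=O(1/\sqrt d)$, $\bar\beta=O(1/\log^2 d)$) together with $q\le d$ to see these are $O(1)$. (c)~The ``$\tilde T_j$-linear'' pieces, where the outer sign $a_j$ meets the sign inside $\epsilon_j$ and collapses via $a_j^2=1$: the leading one is $\frac{\phi'(1)}{2\sqrt q}\sum_j a_j\epsilon_j=\frac{\phi'(1)\eta}{\sqrt q}\sum_j\tilde T_j=\frac{\phi'(1)\eta}{mq}\sum_{i<m}y_i\big(\sum_j\sigma'(\inner{\bw_j,\x_i})\inner{\bw_j,\x_i}\big)$, a sum of $m-1$ independent centered subgaussians; by Theorem~\ref{thm:hoef}, $\eta=m\ln(d)/d$ and $m\le dq/\log^4 d\le d^2/\log^4 d$ this is $O(\sqrt m(\log d)^{3/2}/d)=o(1)$, and the lower-order pieces coming from $R$ are $o(1)$ by the same mechanism or by crude bounds using $\sum_j\tilde T_j^2=O(\log d/m)$. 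Summing up, $|\mu|=O(1)$ w.h.p., which with the first paragraph gives $h_{W+\eta\tilde G}(\x_m)=O(\sqrt{\log d})$ w.h.p.

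The step I expect to be the main obstacle is (b)/(c): the naive Lipschitz estimate $|\phi(\|\bw_j+\eta\tilde\g_j\|)-\phi(1)|\le O(|\delta_j|)$ only yields $|\mu|\le\frac1{\sqrt q}\sum_j O(|\delta_j|)=O(\sqrt q/\log d)$, which is too large once $q$ is polynomially large, so the necessary cancellation must be extracted either from $\sum_j a_j=O(\sqrt q)$ (via the fact that $\|\eta\tilde\g_j\|^2$ concentrates about a neuron-independent value) or from $a_j^2=1$. Doing this cleanly forces one to track, in every Taylor-remainder term, exactly how the quantity depends on the signs $\ba$ — splitting it into an $\ba$-free part and a part that has inherited a sign $a_j$ from the row $\tilde\g_j$ itself — and then verifying that all the resulting polylogarithmic powers fall below $O(1)$, which is precisely what the hypotheses $m\le dq/\log^4 d$ and $q\le d$ provide.
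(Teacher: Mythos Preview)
Your argument is correct, and the first paragraph (Lipschitz bound via Lemma~\ref{lem:G_norm}, Gaussian concentration, reduce to bounding the conditional mean $\mu$) matches the paper exactly. The treatment of $\mu$, however, is considerably more elaborate than what the paper does.

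The paper's route is this. By Lemma~\ref{lem:g_norm} part~1, the \emph{deterministic} number $v^2\eqdef\E\|\bw_j+\eta\tilde\g_j\|^2$ is the same for every $j$ (the law of $\tilde\g_j$ does not depend on~$j$), and the random deviation $\|\bw_j+\eta\tilde\g_j\|^2-v^2$ is $O(1/\sqrt d)$ uniformly in~$j$ w.h.p. Hence $\inner{\bw_j+\eta\tilde\g_j,\x_m}$ can be written as $X+Y_j$ with $X\sim\cn(0,v^2)$ (same law for all~$j$) and $\mathrm{Var}(Y_j)=O(1/d)$. Then
\[
\mu \;=\; \underbrace{\frac{\E\sigma(X)}{\sqrt q}\sum_j a_j}_{=O(1)\text{ by }\sum_j a_j=O(\sqrt q)}\;+\;\frac{1}{\sqrt q}\sum_j a_j\,\E\big[\sigma(X+Y_j)-\sigma(X)\big]\;=\;O(1)+O\!\big(\sqrt{q/d}\big)\;=\;O(1),
\]
using only Lipschitzness of $\sigma$ for the second piece. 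No Taylor expansion, no decomposition $\delta_j=\beta_j+\epsilon_j$, no tracking of how the sign $a_j$ enters $\tilde\g_j$.

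What you do instead is split $\delta_j=\|\bw_j+\eta\tilde\g_j\|^2-1$ into $\beta_j=\|\eta\tilde\g_j\|^2$ (sign-free) and $\epsilon_j=2\eta a_j\tilde T_j$, then Taylor-expand $\phi$ to second order and classify every resulting monomial as type (a)/(b)/(c). This works, but it effectively re-derives by hand the conclusion of Lemma~\ref{lem:g_norm} part~1 (concentration of $\|\bw_j+\eta\tilde\g_j\|^2$ about a $j$-independent value), and your ``main obstacle'' paragraph --- that the naive Lipschitz bound $|\phi(\|\bw_j+\eta\tilde\g_j\|)-\phi(1)|=O(|\delta_j|)=O(1/\log^2 d)$ is too weak --- dissolves once you center at $v$ rather than at~$1$: then the naive Lipschitz bound already gives $O(1/\sqrt d)$. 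In short, the cancellation you extract through the $(\beta_j,\epsilon_j)$ bookkeeping is exactly the cancellation encoded in ``$\E\delta_j$ is the same for all $j$ and $\delta_j-\E\delta_j=O(1/\sqrt d)$''. Your approach buys nothing extra here, though it would be the right move in a setting where the expectation \emph{did} depend on~$j$. One small caution: your type-(b) bound on the remainder piece $c_j=R(\beta_j)$ needs the Lipschitz constant of $R$ on the relevant interval (namely $O(1/\log^2 d)$) together with $|\beta_j-\bar\beta|=O(1/\sqrt d)$, not just $|R(\beta_j)|=O(1/\log^4 d)$; the latter alone gives $\sqrt q/\log^4 d$, which is too large when $q$ is polynomial in $d$.
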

\begin{proof}
By lemma \ref{lem:G_norm}, and since $m \le \frac{dq}{\log^4(d)}$, we have that w.h.p. $\|W + \eta \tilde G\| \le 3 $. Likewise, lemma \ref{lem:g_norm} implies that w.h.p., for all $i$, $\|\bw_i + \eta \tilde \g_i\| = \sqrt{\E\|\bw_i + \eta \tilde \g_i\|^2} + \delta_i $, where $\delta_i = O\left(\frac{1}{\sqrt{d}}\right)$. We will show that the claim holds w.h.p. given these two events.

First, since $\|W + \eta \tilde G\| \le 3 $, $\x\mapsto h_{W+ \eta \tilde G }(\x)$ is $O(1)$-Lipschitz, as a composition of the $O(1)$-Lipschitz functions $\x\mapsto\left(W+ \eta \tilde G\right)\x$, $\x\mapsto\sigma(\x)$, and $\x\mapsto \frac{1}{\sqrt{q}}\sum_{i=1}^qa_i\x_i$.

It follows that, w.h.p., by Lipschitz Gaussian concentration (theorem \ref{thm:gauss}) we have that $h_{W+ \eta \tilde G }(\x_{m})$, is $O(1)$ Sub-Gaussian. Hence, w.h.p., its distance from its expectation is $O\left(\sqrt{\log\left(d\right)}\right)$. It therefore enough to show that $\E_{\x_m}h_{W+ \eta \tilde G }(\x_{m}) = O(1)$.
Since $\|\bw_i + \eta \tilde \g_i\| = \sqrt{\E\|\bw_i + \eta \tilde \g_i\|^2} + \delta_i $, where $\delta_i = O\left(\frac{1}{\sqrt{d}}\right)$, we can write $\inner{\bw_i+ \eta \tilde \g_i , \x_{m}} = X + Y_i$, where $X$ is a centered Gaussian of variance $\E\|\bw_i + \eta \tilde \g_i\|^2$, and $Y_i$ is a centered Gaussian of variance $O\left(\frac{1}{d}\right)$.
We have that 
\begin{eqnarray*}
\E_{\x_m}h_{W+ \eta \tilde G }(\x_{m}) &=& \frac{1}{\sqrt{q}}\sum_{j=1}^qa_i\E_{\x_m}\sigma\left( \inner{\bw_i+ \eta \tilde \g_i , \x_{m}}\right)
\\
&=& \frac{1}{\sqrt{q}}\sum_{j=1}^qa_i\E_{X,Y_i}\sigma\left( X +  Y_i\right)
\\
&=& \frac{1}{\sqrt{q}}\sum_{j=1}^qa_i\E_{X,Y_i}\sigma\left( X \right) + \sigma\left( X +  Y_i\right) - \sigma\left( X \right) 
\\
&\stackrel{\sum_{j=1}^qa_i = O(\sqrt{q})}{=}& O(1) + \frac{1}{\sqrt{q}}\sum_{j=1}^qa_i\E_{X,Y_i}  \sigma\left( X +  Y_i\right) - \sigma\left( X \right) 
\end{eqnarray*}
Now, for every fixed $x$ we have, since $\sigma$ is $O(1)$-Lipschitz,
\[
\left| \E_{Y_i}  \sigma\left( x +  Y_i\right) - \sigma\left( x \right)  \right| \le O(1)\E_{Y_i}|Y_i| = O\left(\frac{1}{\sqrt{d}}\right) 
\]
It therefore follows that $\E_{\x_m}h_{W+ \eta \tilde G }(\x_{m}) = O(1) + O\left(\sqrt{\frac{q}{d}}\right) = O(1)$
\end{proof}

\begin{claim} W.h.p. $y_{m}\left[ h_{W+ \eta \tilde G + \eta G_m}(\x_{m}) - h_{W+ \eta \tilde G }(\x_{m}) \right] = \Omega\left(\log\left(d\right)\right)$
\end{claim}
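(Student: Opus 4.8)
The plan is to expand the difference to first order in the rank-one update $\eta G^m$. Write $\xi_j = \inner{\bw_j + \eta\tilde\g_j, \x_m}$ and $\zeta_j = \inner{\eta\g^m_j, \x_m}$, so that
\[
h_{W+\eta\tilde G + \eta G^m}(\x_m) - h_{W+\eta\tilde G}(\x_m) = \frac{1}{\sqrt q}\sum_{j=1}^q a_j\bigl(\sigma(\xi_j + \zeta_j) - \sigma(\xi_j)\bigr).
\]
Since the $j$-th row of $\eta G^m$ is $\frac{\ln(d)}{d\sqrt q}\,y_m a_j\,\sigma'(\inner{\bw_j,\x_m})\,\x_m^T$, we have $\zeta_j = \frac{\ln(d)}{\sqrt q}\,y_m a_j\,\sigma'(\inner{\bw_j,\x_m})\cdot\frac{\|\x_m\|^2}{d}$. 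The key point is that after a first-order Taylor expansion the factors $a_j^2$ and $y_m^2$ collapse, producing a term of order $\ln(d)$ that is a nonnegative multiple of $\sum_j\sigma'(\inner{\bw_j,\x_m})^2$ — the ``signal''.

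First I would record the following events, each holding w.h.p.\ (i.e.\ with probability $1-o(1/m)$): (i) $\|\x_m\|^2 = (1+o(1))d$, by $\chi^2$ concentration; (ii) $\|\eta\tilde\g_j\| = O(1/\ln d)$ for all $j\in[q]$, by the second part of Lemma~\ref{lem:g_norm}, the hypothesis $m\le dq/\ln^4(d)$, and a union bound over $j$; (iii) consequently, since $\x_m$ is independent of $\x_1,\dots,\x_{m-1}$, the variable $\delta_j := \inner{\eta\tilde\g_j,\x_m}$ is centered Gaussian with standard deviation $\|\eta\tilde\g_j\|$, so $\max_j|\delta_j| = O(1/\sqrt{\ln d})$ by a union bound over $j$; (iv) $\inner{\bw_1,\x_m},\dots,\inner{\bw_q,\x_m}$ are i.i.d.\ $\cn(0,1)$ (orthonormal rows), so by Theorem~\ref{thm:hoef} the average $\frac{1}{q}\sum_{j=1}^q\sigma'(\inner{\bw_j,\x_m})^2$ concentrates around $c_\sigma := \E_{X\sim\cn(0,1)}\sigma'(X)^2$, a positive constant depending only on $\sigma$ (positive because $\sigma$ is non-constant). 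Using $q\le d$ (the rows of $W$ are orthonormal), one checks that each failure probability here is $\exp(-\Omega(\mathrm{poly}\ln d))$, which is $o(1/m)$ since $1/m\ge\ln^4(d)/(dq)\ge 1/d^2$.

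On these events, since $\sigma\in C^2$ with $\|\sigma'\|_\infty,\|\sigma''\|_\infty=O(1)$, I would write $\sigma(\xi_j+\zeta_j)-\sigma(\xi_j)=\sigma'(\xi_j)\zeta_j+O(\|\sigma''\|_\infty\zeta_j^2)$ and then $\sigma'(\xi_j)=\sigma'(\inner{\bw_j,\x_m}+\delta_j)=\sigma'(\inner{\bw_j,\x_m})+O(\|\sigma''\|_\infty|\delta_j|)$. Since $|\zeta_j|=O(\ln(d)/\sqrt q)$ and $\max_j|\delta_j|=O(1/\sqrt{\ln d})$, the two remainder contributions to $\frac{1}{\sqrt q}\sum_j a_j(\,\cdot\,)$ are $O(\ln^2(d)/\sqrt q)=O(1)$ and $O(\sqrt{\ln d})$ respectively, both using $q\ge\ln^4(d)$; and the main term equals
\[
\frac{1}{\sqrt q}\sum_{j=1}^q a_j\,\sigma'(\inner{\bw_j,\x_m})\,\zeta_j = \frac{\ln(d)\,y_m}{q}\cdot\frac{\|\x_m\|^2}{d}\sum_{j=1}^q \sigma'(\inner{\bw_j,\x_m})^2,
\]
using $a_j^2=1$. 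Multiplying the whole difference by $y_m$ (and $y_m^2=1$) and invoking events (i) and (iv), we obtain $y_m\bigl[h_{W+\eta\tilde G+\eta G^m}(\x_m)-h_{W+\eta\tilde G}(\x_m)\bigr]=(1+o(1))c_\sigma\ln(d)+O(\sqrt{\ln d})=\Omega(\ln d)$.

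The main obstacle is the quantitative bookkeeping that keeps both remainders $o(\ln d)$, and this is exactly where the two hypotheses enter: $q\ge\ln^4(d)$ makes the second-order Taylor remainder negligible, while $m\le dq/\ln^4(d)$ forces $\eta\tilde\g_j$ to be a genuinely small perturbation ($O(1/\ln d)$ in norm), so that $\sigma'(\xi_j)$ may be replaced by $\sigma'(\inner{\bw_j,\x_m})$ at a cost of only $O(\sqrt{\ln d})$. A secondary subtlety is that every concentration estimate must fail with probability $o(1/m)$, not merely $o(1)$, so that the union bound over the $m$ examples in Lemma~\ref{lem:main} goes through; the bound $q\le d$ makes this routine by keeping $\ln m=O(\ln d)$.
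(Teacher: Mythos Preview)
Your proposal is correct and follows essentially the same approach as the paper: a first-order Taylor expansion of $\sigma(\xi_j+\zeta_j)-\sigma(\xi_j)$, the replacement of $\sigma'(\xi_j)$ by $\sigma'(\inner{\bw_j,\x_m})$ via the Lipschitzness of $\sigma'$ and the smallness of $\inner{\eta\tilde\g_j,\x_m}$, and then concentration of $\frac{1}{q}\sum_j\sigma'(\inner{\bw_j,\x_m})^2$ around $\E_{X\sim\cn(0,1)}\sigma'(X)^2$. Your bookkeeping of the two remainder terms ($O(\ln^2(d)/\sqrt q)$ and $O(\sqrt{\ln d})$) and your verification that all failure probabilities are $o(1/m)$ match the paper's argument; the only item the paper adds is a short remark on how to handle the piecewise twice-differentiable case (namely, that all but an $o(1)$ fraction of neurons have $\inner{\bw_j,\x_m}$ well inside a smooth piece).
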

\begin{proof}
We first note that by lemma \ref{lem:g_norm} we have that, w.h.p., for every $i\in [q]$, $\|\eta\tilde{g}_i\| = O\left(  \frac{1}{\log(d)}\right)$. Hence, w.h.p, for every $i\in [q]$,  $|\inner{\eta\tilde{g}_i,\x_m}| = O\left(\frac{1}{\sqrt{\log(d)}}\right)$
Fix $i\in [q]$. Recall that  $\eta\g^m_i = \frac{a_iy_m\log(d)}{\sqrt{q}d}\sigma'\left(\inner{\bw_i,\x_m}\right)\x_m$. Likewise, w.h.p., $\frac{\|x_m\|^2}{d} = 1 + o(1)$.
We have that, w.h.p., 
\scriptsize
\begin{eqnarray*}
\sigma\left( \inner{\bw_i+ \eta \tilde \g_i + \eta \g^m_i,\x_m}\right) - \sigma\left( \inner{\bw_i+ \eta \tilde \g_i ,\x_m}\right) &=& \sigma'\left(\inner{\bw_i+ \eta \tilde \g_i ,\x_m}\right)\inner{\eta \g^m_i,\x_m} + O(1)\left(\inner{\eta \g^m_i,\x_m}\right)^2
\\
&=& \frac{\ln(d)}{d\sqrt{q}}\sigma'\left(\inner{\bw_i+ \eta \tilde \g_i ,\x_m}\right)\inner{y_ma_i \sigma'\left(\inner{\bw_i ,\x_m}\right)\x_m,\x_m} 
\\
&& + O(1)\left(\frac{\ln(d)\inner{\x_m,\x_m}}{d\sqrt{q}}\right)^2
\\
&=& \frac{y_ma_i\ln(d)(1 + o(1))}{\sqrt{q}}\sigma'\left(\inner{\bw_i+ \eta \tilde \g_i ,\x_m}\right)\sigma'\left(\inner{\bw_i ,\x_m}\right) 
\\
&& + O\left(\frac{\ln^2(d)}{q}\right)
\\
&\stackrel{\sigma'\text{ is }O(1)\text{-Lip.}}{=}& \frac{y_ma_i\ln(d)(1 + o(1))}{\sqrt{q}}\left(\sigma'\left(\inner{\bw_i ,\x_m}\right) + O\left(|\inner{ \eta \tilde \g_i ,\x_m}|\right)\right)\sigma'\left(\inner{\bw_i ,\x_m}\right) 
\\
&& + O\left(\frac{\ln^2(d)}{q}\right)
\\
&=& \frac{y_ma_i\ln(d)(1 + o(1))}{\sqrt{q}}\left(\sigma'\left(\inner{\bw_i ,\x_m}\right) + o\left(1\right)\right)\sigma'\left(\inner{\bw_i ,\x_m}\right) 
\\
&& + O\left(\frac{\ln^2(d)}{q}\right)
\\
&=& \frac{y_ma_i\ln(d)(1 + o(1))}{\sqrt{q}}\left(\sigma'\left(\inner{\bw_i ,\x_m}\right) \right)^2 + \frac{o(\log(d))}{\sqrt{q}} + O\left(\frac{\ln^2(d)}{q}\right)
\\
&\stackrel{q\ge \log^4(d)}{=}& \frac{y_ma_i\ln(d)(1 + o(1))}{\sqrt{q}}\left(\sigma'\left(\inner{\bw_i ,\x_m}\right) \right)^2 + \frac{o(\log(d))}{\sqrt{q}} 
\end{eqnarray*}
\normalsize
It follows that
\begin{eqnarray*}
 h_{W+ \eta \tilde G + \eta G_m}(\x_{m}) - h_{W+ \eta \tilde G }(\x_{m}) &=& \frac{1}{\sqrt{q}}\sum_{i=1}^qa_i\left(\sigma\left( \inner{\bw_i+ \eta \tilde \g_i + \eta \g^m_i,\x_m}\right) - \sigma\left( \inner{\bw_i+ \eta \tilde \g_i ,\x_m}\right)\right)
 \\
 &=& o(\log(d)) + \frac{y_m\ln(d)(1+o(1))}{q}\sum_{i=1}^q \left(\sigma'\left(\inner{\bw_i ,\x_m}\right) \right)^2
\\
&=& o(\log(d)) + y_m\ln(d)(1+o(1))\E_{X\sim\cn(0,1)}\left(\sigma'(X)\right)^2
\\
&=&  y_m\ln(d)(1+o(1))\E_{X\sim\cn(0,1)}\left(\sigma'(X)\right)^2
\end{eqnarray*}

\end{proof}

To handle the case that $\sigma$ is only piece-wise twice differentiable (with finitely many pieces), one should observe that $1-o(1)$ of the neurons we have that  $\inner{\bw_i,\x_m}$ is well inside one of the pieces, so that the estimation of $\sigma\left( \inner{\bw_i+ \eta \tilde \g_i + \eta \g^m_i,\x_m}\right) - \sigma\left( \inner{\bw_i+ \eta \tilde \g_i ,\x_m}\right)$ is still valid. Likewise, the remaining neurons effect $h_{W+ \eta \tilde G + \eta G_m}(\x_{m}) - h_{W+ \eta \tilde G }(\x_{m}) $ by $o(\log(d))$, and hence the estimation of $h_{W+ \eta \tilde G + \eta G_m}(\x_{m}) - h_{W+ \eta \tilde G }(\x_{m}) $ remains valid.
\end{proof}

\subsubsection*{Acknowledgments}
This research is partially supported by ISF grant 2258/19

\bibliographystyle{plainnat}

\bibliography{bib,bib2}

\end{document}